\documentclass{article}
\usepackage{graphicx}
\usepackage{enumitem}
\usepackage{subcaption}
\usepackage{booktabs}
\usepackage{hyperref}
\usepackage[final,nopatch=footnote]{microtype}
\usepackage{tcolorbox}
\usepackage{caption}
\usepackage{setspace}
\usepackage{amsthm}
\usepackage{wrapfig}
\usepackage{graphicx}
\usepackage{subcaption}
\usepackage{caption}
\usepackage{float}
\theoremstyle{plain}
\newtheorem{theorem}{Theorem}[section]

\newtheorem{lemma}[theorem]{Lemma}
\newtheorem{corollary}[theorem]{Corollary}
\theoremstyle{definition}
\newtheorem{definition}[theorem]{Definition}

\theoremstyle{remark}

\usepackage[preprint]{icml2026}
\usepackage{algorithm}
\usepackage{amsmath}
\usepackage{amssymb}
\usepackage{mathtools}
\usepackage{amsthm}
\usepackage{multirow}
\usepackage{makecell}
\usepackage{amssymb}
\usepackage{pifont}
\usepackage{hyperref}
\newcommand{\cmark}{\ding{51}}
\newcommand{\xmark}{\ding{55}}
\usepackage{soul}
\usepackage[table]{xcolor}
\usepackage[capitalize,noabbrev]{cleveref}
\usepackage[dvipsnames]{xcolor}
\definecolor{datasetgray}{RGB}{230,230,230}
\usepackage[textsize=tiny]{todonotes}

\icmltitlerunning{Gradient Transformer: Learning to Generate Updates for LLMs}

\begin{document}

\twocolumn[
  \icmltitle{Gradient Transformer: Learning to Generate Updates for LLMs}
  \icmlsetsymbol{equal}{*}

  \begin{icmlauthorlist}
    \icmlauthor{Binh-Nguyen Nguyen}{equal,yyy}
    \icmlauthor{Khang Tran}{equal,yyy}
    \icmlauthor{NhatHai Phan}{yyy}
    \icmlauthor{Issa Khalil}{zzz}
  \end{icmlauthorlist}

  \icmlaffiliation{yyy}{Department of Data Science, New Jersey Institute of Technology, Newark, NJ, USA}
  \icmlaffiliation{zzz}{Qatar Computing Research Institute, HBKU, Doha, Qatar}

  \icmlcorrespondingauthor{NhatHai Phan}{phan@njit.edu}
  \icmlkeywords{Knowledge Distillation, Data Privacy, Transformers}

  \vskip 0.3in
]

\printAffiliationsAndNotice{\icmlEqualContribution}

\begin{abstract}
Many organizations lack computational resources to fine-tune large language models (LLMs) on private (unshareable) data for better utility, while fine-tuning tiny language models (TinyLMs) alone performs poorly. To address this bottleneck, we propose a data-free knowledge distillation framework that generates LLM update vectors based on TinyLMs fine-tuned on private data. An update vector is a vector of parameter changes from an initial model to its fine-tuned version on a dataset, capturing the effect of cumulative gradient steps during fine-tuning. The key idea of our framework is a novel \textbf{\underline{Grad}ient \underline{Transformer}} that transforms TinyLM's update vectors into LLM's update vectors. As derived from shadow datasets, \textsc{Grad-Transformer} captures the correlation between TinyLM and LLM update vectors, enabling third-party providers to generate LLM update vectors given the organization's TinyLM update vectors without accessing the organization's private data. The framework supports multi-organization collaboration to jointly update LLMs, improving performance and cost-efficiency. Extensive experiments across language modeling and reasoning tasks show that \textsc{Grad-Transformer} remarkably outperforms state-of-the-art knowledge distillation baselines, even under strict differential privacy protection.
\end{abstract}

\section{Introduction}

Large language models (LLMs) demonstrate superior performance on complex tasks across a wide range of domains, and have become a core component of modern AI systems~\cite{hadi2023survey, raiaan2024review}. However, the cost of fine-tuning an LLM on private data and tasks is unaffordable for most organizations with limited resources~\cite{shen2024efficient, guo2025survey}. There are two solutions for this problem: (1) An organization can afford fine-tuning a TinyLM on their private data locally~\cite{zhang2025rise}; and (2) An organization sends their private data to a third-party service provider's cloud, where the service provider fine-tunes an LLM on the shared data and then provides API services to the organization using the fine-tuned model~\cite{vm2024fine}. Although cost-effective, the former solution with a TinyLM usually results in poor model performance. Meanwhile, organizations cannot share their private data with the service provider due to privacy and security constraints in sensitive real-world applications~\cite{mbonihankuye2019healthcare, voigt2017eu}.

\textbf{Motivation.} To address this problem, a growing line of work studies data-free knowledge distillation, where the student model does not access the data used to train the teacher~\cite{qi2025data, wei2025open}. These approaches train a generative model to synthesize data samples mimicking the private data, guided by the teacher model, and subsequently fine-tune the student using the generated samples. However, these approaches are computationally expensive, require retraining a generator for each new teacher and a large-scale synthetic dataset to fine-tune the student LLM. Recent studies show that synthetic data mimicking private datasets can inadvertently leak sensitive information, causing serious privacy concerns~\cite{zhang2022membership, annamalai2024linear}. Another line of work studies weak-to-strong knowledge distillation, where knowledge from a weaker (smaller) model is transferred to a stronger (larger) model \cite{shin2025weaktostrong,yao2025capabilities}. However, these methods require data sharing between teacher and student, violating the privacy constraints of the organizations. Therefore, it demands a novel mechanism enabling organizations to fine-tune LLMs on their data under privacy constraints. To address this problem, we propose the first data-free weak-to-strong knowledge distillation method that directly generates LLM update vectors based on TinyLM's update vectors fine-tuned on the organization's private data.

\textbf{Challenges.} Developing such a mechanism encounters the following critical challenges. First, the structural and capacity mismatch between TinyLMs and LLMs makes direct knowledge transfer difficult, as the teacher (TinyLM) is less expressive than the student (LLM). Second, the privacy constraints on clients’ training data rule out standard fine-tuning and distillation pipelines. Finally, the extremely high dimensionality of both the TinyLM and LLM parameter spaces demands a scalable, expressive transformation mechanism.

\textbf{Our Method.} To address these challenges, we develop a novel \textbf{\underline{Grad}ient \underline{Transformer}} (\textbf{\textsc{Grad-Transformer}}) that incorporates a Transformer-based encoder-decoder model \cite{chung2024scaling} learning to generate update vectors for target LLMs based on fine-tuned TinyLM's update vectors. Its key idea is to segment the parameters of a fine-tuned TinyLM into a sequence of block-wise update vectors, each of which is projected into the same dimension hidden state, and autoregressively generate the corresponding update vectors for each block of the target LLM. To train the \textsc{Grad-Transformer}, we curate a dataset of update vector tuples capturing the correlation between the TinyLM and the LLM fine-tuned on the same dataset. Training the \textsc{Grad-Transformer} on this dataset enables a third-party service provider to directly generate update vectors for the target LLMs based on the TinyLMs fine-tuned on the client's private dataset without accessing the client's private data. Thereby, \textsc{Grad-Transformer} offer a distinct capability to bypass the costly fine-tuning process of the LLMs on the clients' private data. 

\textbf{Evaluation.} Extensive theoretical analysis on generalization and utility bounds and experiments on six benchmark datasets spanning diverse tasks show that: \textsc{Grad-Transformer} consistently outperforms all state-of-the-art knowledge distillation baselines. For instance, \textsc{Grad-Transformer} achieves an average PGR of 91.88\% compared with 58.94\% of the best-performing baseline across six datasets, registering an improvement of 55.89\%. Thanks to extensive training on optimal LLM update vectors curated from a public dataset with a similar distribution to the private dataset, \textsc{Grad-Transformer} retains highly competitive performance under strict differential privacy protection applied on the clients' fine-tuned TinyLMs.

\textbf{Contributions.} We make the following contributions:
\begin{itemize}[nosep]
    \item We propose the first data-free weak-to-strong knowledge distillation method that directly generates LLM update vectors from the fine-tuned TinyLM's update vectors, enabling privacy-preserving and efficient LLM fine-tuning without accessing private data.
    \item We propose \textsc{Grad-Transformer}, a novel architecture that incorporates a Transformer-based encoder-decoder model that learns to generate update vectors for target LLMs from the update vectors of TinyLMs. 
    \item We conduct a thorough theoretical analysis and extensive experiments on \textsc{Grad-Transformer} to provide guidelines for its practical adoption, and highlight its superior performance, outperforming state-of-the-art knowledge distillation baselines.
\end{itemize}

\section{Related Works}

\textbf{Knowledge Distillation.} Knowledge distillation~\cite{hinton2015distilling} is an advanced method to transfer capabilities of a teacher model to a student model, categorizing in two lines of work: \textbf{(1) Strong-to-weak}, in which the teacher model is larger and more advanced than the student model; and \textbf{(2) Weak-to-strong}, in which the teacher model is smaller and less advanced than the student model~\cite{burns2024weak}. In weak-to-strong mechanisms, knowledge from weaker models is transferred to stronger models using their weak supervision to support super-alignment. \cite{burns2024weak} tried various strategies such as confidence loss and bootstrapping. Overlap density was proposed to explore data characteristics that lead to weak-to-strong knowledge distillation~\cite{shin2025weaktostrong}. Also, several works provided theoretical analysis for this line of work~\cite{lang2024theoretical,dongdiscrepancies,yao2025capabilities}. Nevertheless, these works require sharing data between student and teacher models, violating the organization's privacy~constraint.

\textbf{Data-Free Knowledge Distillation.} Data-free knowledge distillation is a subset of knowledge distillation that does not require using the original training data of the teacher model \cite{lopes2017data}. This research is relevant to our setting, where access to data is restricted by privacy constraints \cite{hu2024sparse}. In these works, a generator generates synthetic samples guided by the teacher model to mimic the original training data, which is then used to train the student model by aligning its predictions with those of the teacher model \cite{tran2024nayer, liu2024small, wei2025open, qi2025data}. 
However, this pipeline poses the privacy risk of leaking identifiable information in the generated samples \cite{giomi2023unified, hu2024sok}, which fundamentally contradicts the privacy motivation of data-free knowledge distillation. In contrast, \textsc{Grad-Transformer} operates on the update vectors of the TinyLMs, enabling organizations to fine-tune the LLM without sharing their private data.

\section{Problem Formulation}

Let us consider a client possesses a private dataset $D= \{z_i\}_{i=1}^{m}$ of a downstream task $\tau$ with $z_i \in \mathcal{Z}$ is a data sample from the sample space $\mathcal{Z}$, $m$ is the total number of data samples in $D$. The client aims to fine-tune a target LLM, $\theta_T \in \Theta_T$, on the private data $D$ for deployment in their systems. Constrained by limited resources, instead of the LLM, the client fine-tunes a TinyLM $\theta_S$ from an initial $\theta_S^0$ on their private data $D$ using a learning method, denoted as $\mathcal{A}(D, \theta_S)$, which optimizes the following objective: 
\begin{align}
    &\theta_S^* = \arg\min_{\theta_S} \frac{1}{m}\sum_{i = 1}^m \ell(z_i, \theta_S),
    \label{eq:train-small-model}
\end{align}
where $\ell(\cdot, \cdot)$ is a non-negative loss function.

The client sends the update vector $\Delta\theta_S \overset{\text{def}}{=} \theta_S^* - \theta_S^0 $ to a (third-party) service provider, who generates an update vector $\Delta\theta_T$ for the target LLM $\theta_T$ from the received $\Delta\theta_S$ through a \textsc{Grad-Transformer} $\mathcal{M}: \Delta \theta_S \rightarrow \Delta \theta_T$ (Eq. \ref{eq:transform}). Given the generated $\Delta\theta_T$, the service provider updates the target LLM from its initial parameter $\theta_T^0$ (Eq. \ref{eq:update}). We formulate the process of updating the target LLM as~follows:
\begin{align}
    \Delta\theta_T &= \mathcal{M}(\Delta\theta_S), \label{eq:transform} \\
    \hat{\theta}_T &= \theta_T^0 + \Delta\theta_T. \label{eq:update}
\end{align} 
The fine-tuned model $\theta^{*}_S = \mathcal{A}(D, \theta_S^0)$ encodes the knowledge derived from the dataset $D$ of the downstream task $\tau$. Therefore, the \textsc{Grad-Transformer} plays the role of knowledge distillation from a fine-tuned TinyLLM to an LLM. This mechanism enables the client to achieve higher performance from the LLM on their local tasks without exposing the private data to a third party. 

We generalize the setting to $N$ clients, where each client $i$ fine-tunes a TinyLM $\theta^*_{S,i}$ on a private dataset $D_i = \{z_j\}_{j=1}^{m_i}$. To update the target LLM, the service provider aggregates the TinyLMs' update vectors as $\Delta\theta_S =\texttt{pool}(\Delta\theta_{S,1}, \dots, \Delta\theta_{S,N})$ where $\Delta\theta_{S,i} = \theta^*_{S,i} - \theta^0_{S,i}$, $\texttt{pool}(\cdot)$ is an aggregation function (e.g., average, summation). The targeted LLM can then be updated as in Eqs. \eqref{eq:transform} and \eqref{eq:update}. As a result, our mechanism enables joint training of the target LLM, significantly improving model performance and cost-effectiveness. In practice, the learning method $\mathcal{A}$ can be a privacy-preserving learning mechanism \cite{abadi2016deep, gong2020survey} that provides rigorous privacy protection for clients' private datasets.

\textbf{\textsc{Grad-Transformer} Objective.} The key objective is designing an effective \textsc{Grad-Transformer} $\mathcal{M}$ to transform the $\Delta\theta_S$ to an update vector of the target LLM $\Delta\theta_T$. We directly extend this objective to support collaborative fine-tuning: Given a set of TinyLMs' update vectors $\{\Delta\theta_{S,i}\}_{i=1}^N$ and outputs $\Delta\theta_T$ to update the target LLM from its inital parameter $\theta_T^0$, such that it achieves high utility on the downstream task $\tau$ across all the clients' private datasets $\{D_i\}_{i=1}^N$, as follows:
\begin{equation}
\Delta\theta_T=\mathcal{M}\Big(\{\Delta\theta_{S,i}\}_{i=1}^N\Big) \text{ s.t. } \min_{\Delta\theta_T}\sum_{i=1}^N\sum_{j=1}^{m_i}\ell(z_j, \theta_T^0 + \Delta\theta_T). \nonumber
\end{equation}

\textbf{Technical Challenges.} Designing $\mathcal{M}$ faces several fundamental technical challenges, as follows: 

\textbf{(1)} Since $\theta_S$ and $\theta_T$ consist of billion parameters, designing an appropriate architecture for $\mathcal{M}$ is non-trivial. For example, a naive modeling approach that concatenates all parameters of TinyLMs and LLMs' update vectors then applies a projection layer before passing them to encoders would require up to trillions of parameters, limiting the adoption of \textsc{Grad-Transformer} in practice. Addressing this problem requires a novel architecture that scales efficiently with the vast parameter space of modern LLMs.

\textbf{(2)} The \textsc{Grad-Transformer} $\mathcal{M}$ does not have access to clients' private datasets or their synthetic substitutions. This constraint prevents the service provider from distilling knowledge from TinyML's logits for specific data points and transferring it to the LLM, as in existing approaches. Bypassing this constraint requires rethinking the forms of knowledge that can be distilled and how to transfer them across models, rather than using logits. 

\section{Learning to Generate Updates for LLMs}
\label{sec:proposed-method}

To address these challenges, our learning framework consists of three phases (Figure \ref{fig:overview}): \textbf{(1)} Update vectors curation, \textbf{(2)} Training the \textsc{Grad-Transformer}, and \textbf{(3)} LLM updating process, as follows.

\begin{figure}[t]
    \centering
    \includegraphics[width=\linewidth]{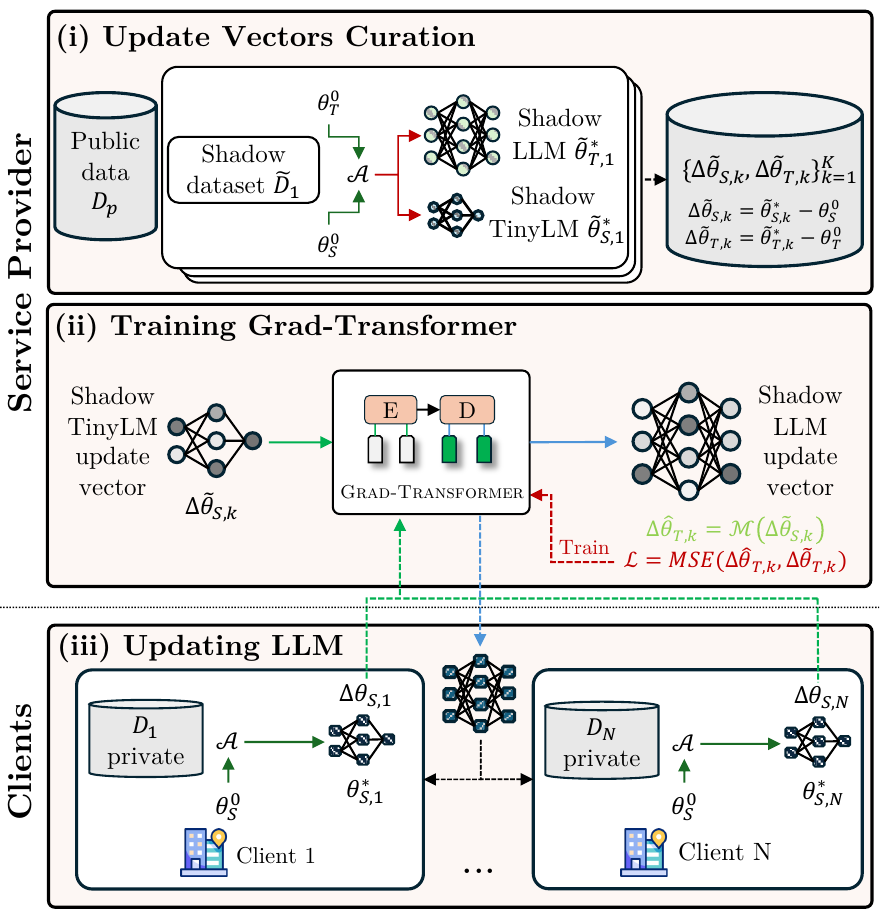}
    \caption{Overview of \textsc{Grad-Transformer}'s pipeline.}
    \label{fig:overview}
    \vspace{-15pt}
\end{figure}

\subsection{Update Vectors Curation}

To curate update vectors, the service provider collects a publicly available dataset $D_p$. By knowing the downstream task $\tau$, the dataset $D_p$ can be made more similar to clients' private datasets, thereby enhancing \textsc{Grad-Transformer}'s performance. When limited public datasets are available for task $\tau$, $D_p$ should cover as much general knowledge as possible to capture more correlation between the update vectors of TinyLMs and the LLMs. Then, it constructs a set of tuples $D_{\Delta} = \{(\Delta\tilde{\theta}_{S,k}, \Delta\tilde{\theta}_{T,k})\}_{k = 1}^K$, where $|D_{\Delta}| = K$ is the number of tuples. These tuples in $D_{\Delta}$ capture the cross-model correlation between updated vectors, enabling learning to transform an update vector of the TinyLM to a corresponding update vector of the LLM.

Specifically, each tuple is derived as follows. Firstly, we randomly split $D_p$ into $K$ subsets $\{\tilde{D}_k\}_{k=1}^K$. Then, the initial TinyLM $\theta^0_S$ and LLM $\theta^0_T$ are fine-tuned on each shadow subset by the learning mechanism $\mathcal{A}$, producing the sets of shadow TinyLMs $\{\tilde{\theta}^*_{S,k}\}_{k=1}^K$ and shadow LLMs $\{\tilde{\theta}^*_{T,k}\}_{k=1}^K$. It is worth noting that the same initial parameters, $\theta_S^0$ and $\theta_T^0$, are used for each fine-tuning process, ensuring consistency in the root of the update vectors.
Then, we compute $\Delta\tilde{\theta}_{S,k} = \tilde{\theta}_{S,k}^* - \theta_S^0$ and $\Delta\tilde{\theta}_{T,k} = \tilde{\theta}_{T,k}^* - \theta_T^0$.

The key advantage of this design is  low variance update vectors resulting in improving numerical stability. In addition, operating on update vectors supports low-rank approximation methods such as LoRA~\cite{hulora}, enabling more flexible adoption of our mechanism.

\subsection{\textsc{Grad-Transformer}}

\begin{figure}[t]
    \centering
    \includegraphics[width=\linewidth]{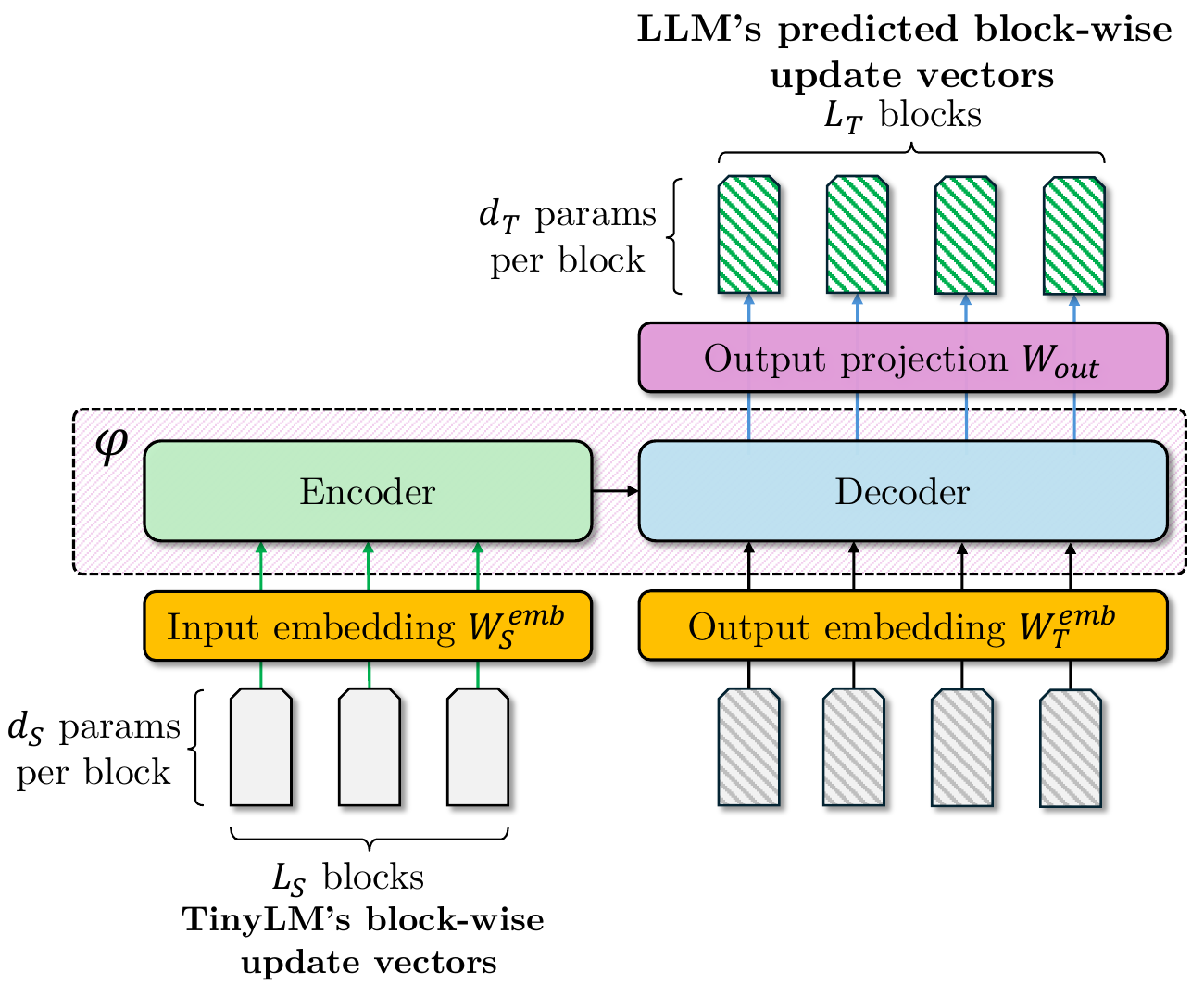}
    \caption{Overview of \textsc{Grad-Transformer} Architecture.}
    \label{fig:grad-transformer}
    \vspace{-18pt}
\end{figure}

Given the set of tuples $D_\Delta$, the service provider trains a \textsc{Grad-Transformer} to take $\Delta\tilde{\theta}_{S,k}$ as input and generate $\Delta\tilde{\theta}_{T,k}$. To address the challenge posed by the large parameter space, we propose a novel \textsc{Grad-Transformer} architecture that segments each update vector into a sequence of block-wise update vectors, each corresponding to a model's attention block (Figure \ref{fig:grad-transformer}). By projecting the block-wise update vector into a predefined dimension, our \textsc{Grad-Transformer} enables the adoption of an encoder-decoder transformer taking a sequence of block-wise updates from a TinyLM to generate a sequence of block-wise updates of the LLM. The specific structure of our \textsc{Grad-Transformer} is as follows.

\textbf{Model Architecture.} For every attention block in the TinyLM and LLM models, \textsc{Grad-Transformer} concatenates the parameter weights in the update vector associated with that block (including the query, key, value, and linear projection weights) into a single block-wise update vector. Now, the update vectors $\Delta\tilde{\theta}_{S,k}$ and $\Delta\tilde{\theta}_{T,k}$ can be decomposed as follows:
\begin{align}
    &\Delta\tilde{\theta}_{S,k} = \{\delta_{S,k}^1, \dots, \delta_{S,k}^{L_S}\},  \delta_{S,k}^j \in \mathbb{R}^{d_S}, \forall j\in[L_S], \\
    &\Delta\tilde{\theta}_{T,k} = \{\delta_{T,k}^1, \dots, \delta_{T,k}^{L_T}\}, \delta_{T,k}^j \in \mathbb{R}^{d_T}, \forall j\in[L_T],
\end{align}
where $d_S$ and $d_T$ respectively are the numbers of concatenated parameters of an attention block of $\theta_S$ and $\theta_T$, $L_S$ and $L_T$ are the number of attention blocks in $\theta_S$ and $\theta_T$ correspondingly, and $\delta^j_{S,k}$ and $\delta^j_{T,k}$ are the block-wise update vectors from $\theta_S$ and $\theta_T$. 

The block-wise update vector, i.e.,  $\delta_{S,k}^j$ and $\delta_{T,k}^j$, serves as a token-like unit in the input and output sequence processed by a transformer model~\cite{yousuf2021systematic}. During the training process, each block-wise update vectors of the TinyLM model and LLM model are projected into hidden states by embedding layers $W^{emb}_S$ and $W^{emb}_T$, following the teacher-forcing paradigm \cite{williams1989learning}:
\begin{align}
    h_{S,k}^j &= W_S^{emb}(\delta_{S,k}^j), \quad\forall j\in[L_S], \\
    h_{T,k}^j &= W_T^{emb}(\delta_{T,k}^j), \quad\forall j\in[L_T].
\end{align}
Then, \textsc{Grad-Transformer} adopts an encoder-decoder model $\varphi(\cdot)$ to generate the next block-wise update vectors of target LLMs in an auto-regressive manner~\cite{graves2013generating}. Specifically, at each decoding step, the model conditions on the encoded hidden states $\{h_{S,k}^1, \dots, h_{S,k}^{L_S}\}$ and the hidden states of previous blocks in the target LLM to predict the hidden state of the next block, as follows:
\begin{align}
    \forall j\in[L_T]: \hat{h}_{T,k}^{j} = \varphi(h_{S,k}^1, \dots, h_{S,k}^{L_S}, h_{T,k}^{<j}).
\end{align}
Finally, the hidden state $\hat{h}_{T,k}^j (\forall j\in[L_T]$) is projected back to block-wise update vectors through a layer~$W_{out}(\cdot)$:
\begin{align}
    \forall j \in[L_T]: \hat{\delta}_{T,k}^j = W_{out}(\hat{h}_{T,k}^j), \label{eq:prediction}
\end{align}
where $\{\hat{\delta}_{T,k}^j\}_j$ serves as the prediction of the (ground-truth) LLM update vectors $\{\delta_{T,k}^j\}_j$.

\textbf{Training.} By minimizing a mean squared error loss between predicted update vectors $\{\hat{\delta}_{T,k}^j\}_j$ and ground-truth update vectors $\{\delta_{T,k}^j\}_j$, the service provider trains the \textsc{Grad-Transformer} $\mathcal{M}$. Let $w \in \mathcal{W}$ is the trainable parameter of $\mathcal{M}$, the training objective is as follows:
\begin{align}
\arg\min_{w \in \mathcal{W}}\frac{1}{K\times L_T}\sum_{k=1}^K\sum_{j=1}^{L_T}\|\hat{\delta}_{T,k}^{j} - \delta_{T,k}^{j} \|_2^2,
\end{align}
where $\| \cdot\|_2$ is the L2 norm of a vector. The parameters are trained on the set $D_\Delta$ using advanced gradient-based optimizers, such as Adam~\cite{kingma2014adam}.

The architecture and training of \textsc{Grad-Transformer} are directly adaptable to the fully connected layers and embedding layers of the TinyLMs and LLMs, using customized $W_S^{emb}$, $W_T^{emb}$, and $W_{out}$ for those layers which align with their number of parameters. In addition, \textsc{Grad-Transformer} is easily adaptable with Low-rank Approximation fine-tuning methods~\cite{hulora, balazy2025lora} to further enhance the scalability of our mechanism. In this setting, $\Delta\tilde{\theta}_S$ and $\Delta\tilde{\theta}_T$ are the fine-tuned LoRA adapters to shrink the parameter space of the update vectors.

\subsection{Updating LLM}

To update the target LLM, the service provider first sends the initial model parameters $\theta_S^0$ to all clients. For the $i$-th client, the initial model $\theta_S^0$ is fine-tuned using the local private dataset $D_i$ with learning method $\mathcal{A}$ to obtain the fine-tuned model ${\theta_{S,i}^*}=\mathcal{A}(D_i, \theta_S^0)$ and derive its update vector $\Delta {{\theta_{S,i}}}={\theta_{S,i}^*}-\theta_S$. Then, each client sends its update vector to the service provider, which aggregates them to obtain $\Delta\theta_S = \frac{1}{N}\sum_{i=1}^N\Delta {{\theta_{S,i}}}$, where $N$ is the number of clients.
Thereafter, $\Delta \theta_S$ is fed to the \textsc{Grad-Transformer} $\mathcal{M}$, which outputs the predicted update vector of the target LLM $\Delta\hat{\theta}_T=\mathcal{M}\left(\Delta \theta_S\right)$. It iw worth noting that, to generate $\Delta\hat{\theta}_T$, the \textsc{Grad-Transformer} recursively concatenates the predicted block-wise update vectors with the input hidden states and feeds them into $\varphi$ to generate the hidden states for the subsequent blocks:
\begin{equation}
    \forall j\in[L_T]: \hat{h}_{T,i}^{j} = \varphi(h_{S,k}^1, \dots, h_{S,k}^{L_S}, \hat{h}_{T,k}^{<j}),
\end{equation}
and output the block-wise update vector as in Eq. \eqref{eq:prediction}.
Finally, we add this predicted update vector to the original parameters of the target LLM to obtain the predicted target LLM $\hat{\theta}_T=\theta_T+\Delta \hat{\theta}_T$ and offer it to every client for inference or their deployments.

\section{Theoretical Analysis} 

To provide a deeper understanding of the proposed method, we study \textsc{Grad-Transformer}'s generalization and utility bounds. We analyze the impact of the dataset $D_p$ and the learning mechanism $\mathcal{A}(\cdot)$ on the performance of $\mathcal{M}$ using an information-theoretic approach, shedding light on the key factors that affect its performance and providing suggestions for its practical adoption.
Let's consider the downstream task $\tau$ inducing a probability distribution $\mu$ over the data sample space $\mathcal{Z}$. Similarly, the shadow dataset $D_p$ follows distribution $\tilde{\mu}$. Given a private dataset $D$, we define the empirical risk of $\mathcal{M}$ on $D$ as follows:
\begin{align}
    &R_D(w) = \mathbb{E}_{\theta_S \sim P(\theta_S|D)}\Big[\sum_{i=1}^N\sum_{j=1}^{m_i}\ell(z_j, \theta_T^0 + \Delta\theta_T)\Big], \nonumber
\end{align}
where the expectation is taken over the space of $\theta_S$ fine-tuned on $D$ by learning mechanism $\mathcal{A}$. Similarly, given a distribution $\mu$ on the sample space $\mathcal{Z}$, we define the population risk with respect to $\mu$ of $\mathcal{M}$, as follows:
\begin{align}
    &R_\mu(w) = \mathbb{E}_{\theta_S \sim P(\theta_S|D)}\Big[\mathbb{E}_{z\sim\mu}\ell(z, \theta_T^0 + \Delta\theta_T)\Big]. \nonumber
\end{align}
It is worth noting that we also adopt these empirical risk measures for the dataset $D_p$ and the distribution $\tilde{\mu}$.

\begin{wrapfigure}{r}{0.55\linewidth}
    \centering
    \vspace{-10pt}
    \includegraphics[width=\linewidth]{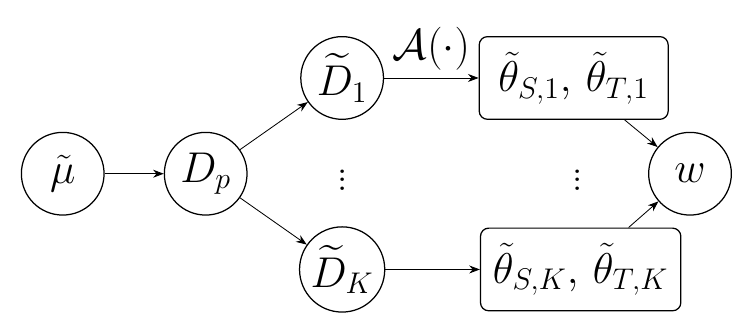}
    \vspace{-10pt}
    \caption{Bayesian Network of the \textsc{Grad-Transformer} framework.}
    \label{fig:bayesian-net}
\end{wrapfigure}

We consider the Bayesian Network in Figure \ref{fig:bayesian-net} for the training process of our mechanism, in which we can indicate that the parameter $w \in \mathcal{W}$ follows the conditional distribution $P(w|D_p)$. In addition, we also consider an assumption that the loss function $\ell(\cdot)$ follows a $\sigma$-subgaussian distribution. This assumption is widely adopted for generalization bounds for knowledge distillation~\cite{zhang2025shiftkd}, foundation model analysis~\cite{zhu2024asymmetry}, and other advanced learning methods such as meta learning~\cite{chen2021generalization,wu2024generalization}. From these considerations, we can derive the following generalization and utility analysis.

\textbf{Generalization Analysis.}
We focus on understanding the generalization of $\mathcal{M}$ across the distribution $\tilde{\mu}$, defined by:
\begin{align}
    \texttt{Gen}(w) = R_{\tilde{\mu}}(w)- R_{D_p}(w).
\end{align}
The following lemma upper bounds this metric to derive a theoretical guarantee on the generalization of $\mathcal{M}$.
\begin{lemma} \label{lemma:generalize_in_shadow_distribution}
    Suppose $\ell(\cdot, \cdot)$ follows a $\sigma$-subgaussian distribution. In expectation over parameter space $\mathcal{W}$ and the sampling process of $D_p$ from $\tilde{\mu}$, the generalization of $w$ is bounded as follows:
    \begin{align}
        |\mathbb{E}_{w, D_p}\texttt{Gen}(w)| \leq 2\sqrt{\frac{\sigma^2I(w; D_p)}{2|D_p|}},
    \end{align}
    where $I(w; D_p)$ is the mutual information between $w$ and $D_p$, and $|D_p|$ is the number of data samples in $D_p$.
\end{lemma}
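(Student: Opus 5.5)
This is the Xu--Raginsky information-theoretic generalization bound applied to the pair $(w, D_p)$. The plan is to use the Donsker--Varadhan variational representation of the KL divergence together with the subgaussian tail assumption on the loss. The factor $2$ in the statement suggests following the decomposition used in meta-learning analyses. There the generalization gap is split into two terms, each bounded by $\sqrt{\sigma^2 I(w;D_p)/(2|D_p|)}$. That is the structure I will aim for.

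\textbf{Step 1: rewrite the gap.} Write $R_{D_p}(w)$ as an empirical average of a per-sample quantity $f(w,z)$ over $z \in D_p$. Here $f(w,z)$ is the loss of $\theta_T^0+\mathcal{M}_w(\Delta\theta_S)$ on $z$, averaged over the randomness of the shadow TinyLM update, and I normalise by $1/|D_p|$. Let $\bar w$ be an independent copy of $w$ with the same marginal $P_w$, also independent of $D_p$. Then
\[
\mathbb{E}_{w,D_p}\texttt{Gen}(w)=\mathbb{E}_{P_w\otimes P_{D_p}}[R_{D_p}(\bar w)]-\mathbb{E}_{P_{w,D_p}}[R_{D_p}(w)] .
\]
This holds because for fixed $\bar w$ independent of $D_p$, the expectation of the empirical risk equals the population risk $R_{\tilde\mu}(\bar w)$.

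\textbf{Step 2: apply Donsker--Varadhan.} For any $\lambda$, use
\[
\mathrm{KL}(P_{w,D_p}\,\|\,P_w\otimes P_{D_p}) \ge \lambda\,\mathbb{E}_{P_{w,D_p}}[R_{D_p}(w)] - \log \mathbb{E}_{P_w\otimes P_{D_p}}\big[e^{\lambda R_{D_p}(\bar w)}\big].
\]
The left-hand side equals $I(w;D_p)$. Under the product measure, $R_{D_p}(\bar w)$ is an average of $|D_p|$ i.i.d.\ $\sigma$-subgaussian terms, so it is $\sigma/\sqrt{|D_p|}$-subgaussian. Its log-MGF is therefore at most $\lambda\,\mathbb{E}[R_{D_p}(\bar w)]+\lambda^2\sigma^2/(2|D_p|)$. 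Optimising the resulting quadratic inequality over $\lambda$, for both signs, gives
\[
|\mathbb{E}\,\texttt{Gen}|\le\sqrt{2\sigma^2 I(w;D_p)/|D_p|}=2\sqrt{\sigma^2 I(w;D_p)/(2|D_p|)} ,
\]
which is exactly the claimed constant.

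So the single-term argument already yields the constant $2$, since $\sqrt{2a}=2\sqrt{a/2}$. Alternatively, one can split the gap into a task-level term (randomness of the TinyLM/LLM updates) and a sample-level term, bound each by $\sqrt{\sigma^2 I/(2|D_p|)}$, and add them. Either route gives the stated bound.

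\textbf{Main obstacle.} The delicate step is justifying subgaussianity of $R_{D_p}(\bar w)$ under the product distribution. The loss is evaluated at a model depending on $\Delta\theta_S$, which is drawn from $P(\theta_S\mid D)$ and is therefore coupled with the data. I would handle this by reading the $\sigma$-subgaussian assumption as applying to the averaged per-sample loss $f(\bar w,z)$ for each fixed $\bar w$, with $z\sim\tilde\mu$. I would then use independence of the samples in $D_p$, which follows from the random split construction, to get the $1/|D_p|$ variance-proxy scaling. If the expectation over $\theta_S$ breaks independence across samples, the fallback is to condition on $\theta_S$, apply the bound conditionally, and use the chain rule $I(w;D_p\mid\theta_S)\le I(w;D_p,\theta_S)$ together with Jensen's inequality. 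This gives the bound up to the stated form.
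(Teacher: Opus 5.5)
Your Steps 1--2 are the paper's proof: an independent copy $\bar w\sim P_w$ turns $\mathbb{E}[R_{\tilde\mu}(w)]$ into $\mathbb{E}_{\bar w,D_p}[R_{D_p}(\bar w)]$; Donsker--Varadhan with $g=\lambda R_{D_p}$ and the $\sigma/\sqrt{|D_p|}$-subgaussian CGF bound give $I(w;D_p)/\lambda+\lambda\sigma^2/(2|D_p|)$; and optimizing over both signs of $\lambda$ gives $2\sqrt{\sigma^2 I(w;D_p)/(2|D_p|)}$. The task/sample split and the $\theta_S$-conditioning fallback are unnecessary (the fallback would not recover $I(w;D_p)$ itself anyway), because the paper simply treats the per-sample loss as a function of $(w,z)$, and the only point to tighten is one the paper also glosses: if, as in your last paragraph, subgaussianity holds only for each fixed $\bar w$, the summands are not i.i.d.\ under $P_w\otimes P_{D_p}$, so apply Donsker--Varadhan to $\mathrm{KL}(P_{D_p\mid w}\,\|\,P_{D_p})$ for each $w$ and average using $\mathbb{E}_w\sqrt{\cdot}\le\sqrt{\mathbb{E}_w[\cdot]}$ and $I(w;D_p)=\mathbb{E}_w\,\mathrm{KL}(P_{D_p\mid w}\,\|\,P_{D_p})$, which yields the same constant.
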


Lemma~\ref{lemma:generalize_in_shadow_distribution} suggests that the mutual information between $D_p$ and $w$ derived from $D_p$, which quantifies the dependence of $w$ on $D_p$, directly bounds the generalization of $\mathcal{M}$ on $\tilde{\mu}$. In addition, Lemma~\ref{lemma:generalize_in_shadow_distribution} also suggests that a larger $D_p$ leads to better generalization of $\mathcal{M}$, which is intuitive as a larger dataset $D_p$ allows $\mathcal{M}$ to better capture the distribution $\tilde{\mu}$, thereby reducing the mutual information between the trained $w$ and individual samples. 

\textbf{Utility Bound.} We also provide utility bound for \textsc{Grad-Transformer} framework at the inference time on a client's private dataset $D$, which is measured by:
\begin{align}
    \texttt{Util}(w) = R_{D}(w) - R_{D_p}(w).
\end{align}

\begin{theorem}
    \label{theo:utility_bound}
    Suppose $\ell(\cdot, \cdot)$ follows a $\sigma$-subgaussian distribution. Then, in expectation over parameter space $\mathcal{W}$ and the sampling process of $D_p$ from $\tilde{\mu}$, the utility bound of $w$ on target dataset $D$ is bounded as follows:
    \begin{align}
        \Big|\mathbb{E}_{w, D_p}\texttt{Util}(w) \Big| &\leq 2\sqrt{\frac{\sigma^2[I(w; D_p) + KL(\tilde{\mu}\|\mu)]}{2|D|}}, \nonumber
    \end{align}
    where $I(w; D_p)$ is the mutual information between $w$ and $D_p$, $KL(\cdot\|\cdot)$ is the KL-divergence, and $|D|$ is the size of $D$.
\end{theorem}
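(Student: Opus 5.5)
The plan is to run the same information-theoretic argument as Lemma~\ref{lemma:generalize_in_shadow_distribution}: a Donsker--Varadhan change of measure combined with the $\sigma$-subgaussian assumption. The difference is that the reference measure is now induced by the target distribution $\mu$ rather than the shadow distribution $\tilde{\mu}$. This makes the mismatch between $\tilde{\mu}$ and $\mu$ show up as an additive KL term inside the square root.

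\textbf{Step 1: Decompose the utility gap.} Write
\begin{align}
\texttt{Util}(w) = R_D(w) - R_{D_p}(w),\nonumber
\end{align}
and view $D$ as drawn from $\mu$ while $D_p$ is drawn from $\tilde{\mu}$. Consider the joint law $P_{w,D_p}$ against the product $P_w\otimes \mu^{\otimes |D|}$, where the evaluation sample is an independent copy drawn from $\mu$. Apply Donsker--Varadhan to $f = \lambda(\text{empirical loss})$:
\begin{align}
\lambda\,\mathbb{E}_{P_{w,D_p}}[f] \le KL\big(P_{w,D_p}\,\|\,P_w\otimes \mu^{\otimes}\big) + \log \mathbb{E}_{P_w\otimes\mu^{\otimes}} e^{\lambda f}.\nonumber
\end{align}

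\textbf{Step 2: Split the KL term.} Use the chain rule on the divergence:
\begin{align}
KL\big(P_{w,D_p}\|P_w\otimes\mu^{\otimes}\big) = KL\big(P_{w,D_p}\|P_w\otimes\tilde{\mu}^{\otimes}\big) + KL\big(\tilde{\mu}^{\otimes}\|\mu^{\otimes}\big) = I(w;D_p) + KL(\tilde{\mu}^{\otimes}\|\mu^{\otimes}).\nonumber
\end{align}
The identity holds because the $D_p$-marginal of $P_{w,D_p}$ is $\tilde{\mu}^{\otimes}$, so the log-ratio splits into the mutual-information part plus $\log(d\tilde{\mu}^{\otimes}/d\mu^{\otimes})$. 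I would then normalize per sample. The average loss over $|D|$ i.i.d.\ points is $\sigma/\sqrt{|D|}$-subgaussian, and with the per-sample normalization adopted in the paper's statement the KL of the product is charged as the single-sample $KL(\tilde{\mu}\|\mu)$.

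\textbf{Step 3: Optimize over $\lambda$.} Subgaussianity bounds the log-MGF by $\lambda\,\mathbb{E}[f] + \lambda^2\sigma^2/(2|D|)$. Optimizing over $\lambda$ of both signs gives
\begin{align}
\big|\mathbb{E}[R_D] - \mathbb{E}[R_{D_p}]\big| \le \sqrt{2\sigma^2\,[I(w;D_p)+KL(\tilde{\mu}\|\mu)]/(2|D|)}.\nonumber
\end{align}

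\textbf{Step 4: Pass through the population risk.} Routing the comparison through the population risk costs a factor of $2$, via the triangle inequality
\begin{align}
|\texttt{Util}| \le |R_D - R_\mu| + |R_\mu - R_{D_p}|.\nonumber
\end{align}
Each piece is bounded by the Step~3 type bound with the larger information quantity. Both are dominated by $\sqrt{\sigma^2[I+KL]/(2|D|)}$, using $|D|\le|D_p|$ or, equivalently, the normalization in the statement. Summing the two pieces yields the stated constant $2$.

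The main obstacle is Step~2 combined with the sample-size bookkeeping. One has to justify that the KL between product measures collapses to $KL(\tilde{\mu}\|\mu)$ rather than $|D|\cdot KL$, and that the denominator is $|D|$ rather than $|D_p|$. I would handle this as in Xu--Raginsky style proofs, by working with a single generic evaluation sample $z$ (the subgaussian loss of one point) and averaging. In that form the change of measure from $\tilde{\mu}$ to $\mu$ only costs the single-sample KL, and the $1/|D|$ comes from averaging the $|D|$ independent evaluation terms.
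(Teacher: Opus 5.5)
Your Steps 1--3 follow the paper's route: Donsker--Varadhan against $P_w\otimes\mu$, the split of that divergence into $I(w;D_p)+KL(\tilde{\mu}\|\mu)$, the cumulant bound $\lambda^2\sigma^2/(2|D|)$, and optimization over $\lambda$ of both signs. However, the constant bookkeeping goes wrong, and Step 4 should be removed.

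Set $A=I(w;D_p)+KL(\tilde{\mu}\|\mu)$. Minimizing $A/\lambda+\lambda\sigma^2/(2|D|)$ over $\lambda>0$ gives $2\sqrt{\sigma^2A/(2|D|)}$, which is already the stated bound. The factor $2$ is the $2\sqrt{ab}$ produced by this optimization (the paper's ``Cauchy--Schwarz'' step, exactly as in the Lemma), not the price of a triangle inequality. Your Step 3 value $\sqrt{2\sigma^2A/(2|D|)}$ is off by a factor $\sqrt{2}$, and that slip is what sent you looking for a factor $2$ elsewhere. Step 4 is also unsupported as written, for two reasons:
\begin{itemize}
\item You claim each piece is at most $\sqrt{\sigma^2A/(2|D|)}$, which is smaller than what Step 3 itself gives for the whole gap.
\item The piece $R_\mu-R_{D_p}$ contains the pure shift $R_\mu(w)-R_{\tilde{\mu}}(w)$. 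This does not shrink with $|D|$ or $|D_p|$, so appealing to $|D|\le|D_p|$ cannot control it.
\end{itemize}

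The obstacle you flag at the end is real, and the resolution you propose cannot work. Both natural ways of organizing the change of measure give at best $\sqrt{2\sigma^2\,(I(w;D_p)/n+KL(\tilde{\mu}\|\mu))}$, with the single-sample KL \emph{not} divided by $n$:
\begin{itemize}
\item In the joint version, the $1/n$ from the $\sigma/\sqrt{n}$-subgaussian average is cancelled by $KL(\tilde{\mu}^{\otimes n}\|\mu^{\otimes n})=n\,KL(\tilde{\mu}\|\mu)$.
\item In the per-sample version, the cumulant bound is only $\lambda^2\sigma^2/2$. Averaging the $n$ per-sample bounds (Jensen plus $\sum_i I(w;z_i)\le I(w;D_p)$) divides the information term but not the KL.
\end{itemize}

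The single-sample reading of the statement is in fact false. Take $w$ fixed and generating the same update for every input, so $I(w;D_p)=0$ and the deployed LLM is one fixed model. For a bounded loss under which that model's risks under $\mu$ and $\tilde{\mu}$ differ, the left side averaged over $D$ is the constant $|R_\mu(w)-R_{\tilde{\mu}}(w)|>0$. Meanwhile $2\sqrt{\sigma^2KL(\tilde{\mu}\|\mu)/(2|D|)}\to0$ as $|D|\to\infty$.

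The paper reaches the displayed form by writing $KL(P_{D_p}\|\mu)=KL(\tilde{\mu}\|\mu)$ ``since $D_p\sim\tilde{\mu}$''. That is, it lets $\tilde{\mu}$ and $\mu$ denote laws of entire datasets, so the KL in the statement is the product-level divergence. Similarly, the denominator the change of measure produces is the size of the set on which the loss is evaluated under $P_{w,D_p}$, namely $|D_p|$. Your Step 1 divergence is defined only if the reference product has $|D_p|$ factors. This agrees with the statement's $|D|$ only when $|D|=|D_p|$, and no averaging device turns one into the other.

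To repair the proposal, adopt the dataset-level reading explicitly, correct the optimization, and drop Step 4. Steps 1--3 then constitute the paper's proof.
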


Theorem~\ref{theo:utility_bound} also indicates that the utility error bound is determined by the mutual information between $D_p$ and $w$, with the addition of the distributional distance between the shadow dataset and the client’s private dataset. Consequently, the performance of $\mathcal{M}$ improves when $w$ depends less on $D_p$, and the shadow and private data distributions are more aligned. In addition, Theorem~\ref{theo:utility_bound} also suggests that the utility bound decreases as the private dataset is larger, which is intuitive since the performance of $\theta_S$ fine-tuned on $D$ improves with data scale~\cite{kaplan2020scaling}, resulting in better updated LLMs from $\theta_S$.

\textbf{Impact of Learning Mechanism $\mathcal{A}$.} Both Lemma~\ref{lemma:generalize_in_shadow_distribution} and Theorem~\ref{theo:utility_bound} suggests that by controling the mutual information between $D_p$ and $w$, we can improve the generalization and utility of $\mathcal{M}$. To do so, we analyze the impact of$\mathcal{A}$ since $w$ is trained from update vectors derived from $\mathcal{A}$. From the Bayesian Network in Figure \ref{fig:bayesian-net}, we derive the following~Corollary.

\begin{corollary}\label{corollary:impact-of-A}
    The utility bound and the generalization of $\mathcal{M}$ depends on the generalization of learning mechanism $\mathcal{A}$, and scale at a rate of $\mathcal{O}\Big(\sqrt{\sum_{k=1}^KI(\tilde{\theta}_{S,k}, \tilde{\theta}_{T,k}; D_p)}\Big)$ since $I(w; D_p) \leq\sum_{k=1}^KI(\tilde{\theta}_{S,k}, \tilde{\theta}_{T,k}; D_p)$.
\end{corollary}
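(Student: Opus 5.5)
The plan is to obtain the inequality $I(w; D_p) \leq \sum_{k=1}^K I(\tilde{\theta}_{S,k}, \tilde{\theta}_{T,k}; D_p)$ from the Bayesian Network in Figure~\ref{fig:bayesian-net}. Substituting it into Lemma~\ref{lemma:generalize_in_shadow_distribution} and Theorem~\ref{theo:utility_bound} then gives the stated rate.

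The Markov structure is as follows. $D_p$ is split into $\{\tilde{D}_k\}_{k=1}^K$. Each pair $(\tilde{\theta}^*_{S,k}, \tilde{\theta}^*_{T,k})$ is produced by $\mathcal{A}$ from $\tilde{D}_k$ and the fixed initializations $\theta_S^0, \theta_T^0$. The update vectors $(\Delta\tilde{\theta}_{S,k}, \Delta\tilde{\theta}_{T,k})$ are deterministic translates of these pairs, so they carry the same information. Finally, $w$ is trained only on $D_\Delta$. Hence we have the chain
\begin{equation*}
D_p \to \big\{(\tilde{\theta}_{S,k}, \tilde{\theta}_{T,k})\big\}_{k=1}^K \to w .
\end{equation*}

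The first step applies the data-processing inequality to this chain:
\begin{equation*}
I(w; D_p) \leq I\big(\{(\tilde{\theta}_{S,k}, \tilde{\theta}_{T,k})\}_{k=1}^K; D_p\big).
\end{equation*}

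The second step expands the right-hand side by the chain rule:
\begin{equation*}
\sum_{k=1}^K I\big(\tilde{\theta}_{S,k}, \tilde{\theta}_{T,k}; D_p \mid \{(\tilde{\theta}_{S,k'}, \tilde{\theta}_{T,k'})\}_{k'<k}\big).
\end{equation*}
Each conditional term must be bounded by its unconditional counterpart. The justification is that, given $D_p$, the $K$ fine-tuning runs use independent randomness and fixed initializations, so the pairs are conditionally independent given $D_p$. The standard identity for conditionally independent outputs then gives
\begin{equation*}
I(X_k; D_p \mid X_{<k}) = H(X_k \mid X_{<k}) - H(X_k \mid D_p) \leq H(X_k) - H(X_k \mid D_p) = I(X_k; D_p).
\end{equation*}
This uses $H(X_k \mid D_p, X_{<k}) = H(X_k \mid D_p)$ from conditional independence, and the fact that conditioning reduces entropy. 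For continuous parameters the same argument is written with differential entropies, or directly with KL divergences.

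This second step is the main obstacle. The difficulty is that the random split of $D_p$ into subsets makes each $\tilde{D}_k$ a random function of $D_p$. To secure conditional independence, I would condition on the split randomness $U$: first bound $I(w; D_p \mid U)$, then absorb $U$, which is independent of $D_p$, using $I(w; D_p) \leq I(w, U; D_p) = I(w; D_p \mid U)$.

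Once the inequality holds, I would plug it into $I(w; D_p)$ in Lemma~\ref{lemma:generalize_in_shadow_distribution} and Theorem~\ref{theo:utility_bound}. Since the bounds are monotone in $I(w; D_p)$, both become $\mathcal{O}\big(\sqrt{\sum_k I(\tilde{\theta}_{S,k}, \tilde{\theta}_{T,k}; D_p)}\big)$, with $\sigma$, $|D_p|$, $|D|$ and $KL(\tilde{\mu}\|\mu)$ treated as constants. Each $I(\tilde{\theta}_{S,k}, \tilde{\theta}_{T,k}; D_p)$ is the information-theoretic generalization quantity of $\mathcal{A}$. This makes precise the claim that the bounds depend on the generalization of $\mathcal{A}$; for example, a differentially private $\mathcal{A}$ bounds each term.
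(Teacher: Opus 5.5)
Your core argument matches the paper's proof. The paper applies the data-processing inequality along $D_p\to\Phi\to w$, with $\Phi=\{X_k\}_{k=1}^K$ and $X_k=(\tilde\theta_{S,k},\tilde\theta_{T,k})$, then the chain rule, then drops the conditioning on $X_{<k}$ because the shadow runs are independent. You are more careful at that last step, where the paper writes an equality. A cleaner form, with no differential entropies, is $I(X_k;D_p\mid X_{<k})=I(X_k;D_p)-I(X_k;X_{<k})$ whenever the $X_k$ are conditionally independent given $D_p$. When the $\tilde D_k$ are a fixed partition of the i.i.d. sample, the $X_k$ are also mutually independent, so the paper's equality is correct in that model.

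The gap is in how you handle the random split. Conditioning on $U$ does give $I(w;D_p)\le I(w;D_p\mid U)\le\sum_k I(X_k;D_p\mid U)$, but the per-pair terms remain conditioned on $U$, and you cannot remove that. Since $U$ is independent of $D_p$, $I(X_k;D_p\mid U)=I(X_k,U;D_p)=I(X_k;D_p)+I(U;D_p\mid X_k)\ge I(X_k;D_p)$, which is the wrong direction.

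No argument can repair this, because under a genuinely random split the stated inequality is false. Let $D_p=(z_1,z_2)$ be i.i.d. fair bits with $K=2$. Assign $z_1,z_2$ to $\tilde D_1,\tilde D_2$ in uniformly random order, let $\mathcal A$ output its single sample, and let $w=\Phi$. Then $I(w;D_p)=3/2$ bits, while $I(X_1;D_p)+I(X_2;D_p)=1$ bit; your conditioned bound gives $2$.

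What your proof actually establishes is $I(w;D_p)\le\sum_k I(X_k;\tilde D_k)$. This holds because, given $U$, $X_k$ depends on $D_p$ only through $\tilde D_k$, and the law of $(\tilde D_k,X_k)$ does not depend on $U$. That is a valid bound, but in a larger quantity than the one stated.

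To get the corollary as written, drop $U$ and work in the paper's implicit model, where the $\tilde D_k$ form a fixed partition of $D_p$, as in the Bayesian network. You can justify this by re-indexing $D_p$ as the concatenation $(\tilde D_1,\dots,\tilde D_K)$. This is again an i.i.d. sample from $\tilde\mu$ with the same (permutation-invariant) empirical risk, so the Lemma and Theorem apply to it. In that model, conditional independence given $D_p$ holds directly and the rest of your argument goes through.
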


The proof of Corollary \ref{corollary:impact-of-A} is derived from the data processing inequality~\cite{cover1999elements} based on the Bayesian Network in Figure \ref{fig:bayesian-net}. It implies that practitioners can control the mutual information between $D_p$ and $w$ through the learning mechanism $\mathcal{A}$. Specifically, practitioners can apply learning mechanisms that regularize on the mutual information $I(\tilde{\theta}_{S,k}, \tilde{\theta}_{T,k}; D_p), \forall k$, such as the Gibbs algorithm and noisy empirical risk minimization algorithm~\cite{kuzborskij2019distribution, bu2022characterizing, zhu2024information} to reduce the dependence of $\tilde{\theta}_{S,k}$ and $\tilde{\theta}_{T,k}$ on $D_p$, resulting in a lower mutual information between $w$ and $D_p$. The full proofs of this section are in Appx. \ref{appx:theoretical-analysis}.

\section{Experiments and Results}
\label{sec:experimental-results}

Our extensive experiments across language modeling and reasoning tasks validate the effectiveness of \textsc{Grad-Transformer}. Our framework surpasses SoTA baselines across single-client and multiple-client settings even under strict differential privacy protection.

\textbf{Datasets and Metrics.} We use six benchmark datasets: AQuA-RAT \cite{ling2017program}, GSM8K \cite{cobbe2021training}, CommonsenseQA \cite{talmor2019commonsenseqa}, DROP \cite{dua2019drop}, SAMSum \cite{gliwa2019samsum}, and DialogSum \cite{chen2021dialogsum}, covering math reasoning, commonsense reasoning, discrete reasoning, and dialogue summarization. Detailed description of datasets is in Appx. \ref{appx:datasets}.

AQuA-RAT, GSM8K, CommonsenseQA, DROP uses exact match accuracy as the evaluation metric, and SAMSum, DialogSum use ROUGE-1 \cite{lin2004rouge} as the evaluation metric (Appx.~\ref{appx:datasets}). Additionally, for each experiment, we use Performance Gap Recovered (PGR), a standard metric used to evaluate weak-to-strong distillation methods \cite{burns2024weak}, calculated as: $\text{PGR} = \frac{\hat{P}_T-P_S}{P_T-P_S}$, 
where $\hat{P}_T$ is the performance of the updated LLM $\hat{\theta}_T$, $P_S$ and $P_T$ are the performance of the TinyLM and LLM models fine-tuned directly on the client's private dataset with mechanism $\mathcal{A}(\cdot)$, respectively. $P_T$ plays the role of the ceiling performance we can achieve on the private dataset.

\textbf{Constructing Update Vector Tuples $D_\Delta$.} For a considered dataset, we follow \cite{burns2024weak} to randomly split its training set into two parts: one is used as the client's private data $D$, and the other is used as public dataset $D_p$. We split the public dataset into $K$ = 300 shadow datasets by randomly choosing 1,024 data samples for each shadow dataset. We employ a supervised fine-tuning mechanism $\mathcal{A}$ with LoRA \cite{hulora} using rank $r$ = 2, in which the LoRA adaptations are update vectors. For each shadow dataset, we fine-tune the models until they converge and collect LoRA adaptations of the TinyLM and the LLM models from 200 last training steps. Thereby, with each shadow dataset, we obtain 200 update vector tuples. In total, we have 60,000 update vector tuples to train the \textsc{Grad-Transformer}. We randomly split these curated tuples into training and validation sets at a 95:5 ratio.

\textbf{Models.} We use the Qwen2.5 family models~\cite{qwen2025qwen25technicalreport} as the TinyLM and the LLM models. In particular, we use the Qwen2.5-3B-Instruct model as the TinyLM model and the Qwen2.5-7B-Instruct model as the LLM model. For clients, fine-tuning the 3B model TinyLM is significantly more efficient compared to the 7B LLM (Table \ref{tab:time}, Appx. \ref{appx:additional-experiments}). In addition, to analyze the scalability of our mechanism, we use TinyLM model scales from 0.5B to 3B, and LLM model scales from 7B to 14B.
For \textsc{Grad-Transformer}, we use Flan-T5-Large \cite{chung2024scaling} as the Transformer-based encoder-decoder $\varphi$. To train \textsc{Grad-Transformer}, we use batch size 32, learning rate ranging from 2e-5 to 8e-5 across datasets, and train for 30 epochs.

\begin{table}[!t]
\centering
\scriptsize
\caption{Results in the Single Client setting on six different datasets. The best and second-best results are highlighted in \textbf{bold} and \underline{underline}, respectively. Acc: Accuracy, R-1: ROUGE-1.}
\setlength{\tabcolsep}{3pt}
\renewcommand{\arraystretch}{1.15}
\begin{tabular}{lcc|ccc>{\columncolor{datasetgray}}c|c}
\hline
 & \textbf{Metric} & $\mathbf{P_S}$ & \textbf{W2S} &
\textbf{Conf} & \textbf{VisSup} &
\textbf{Ours} &
$\mathbf{P_T}$ \\
\hline
\makecell{\textit{Access client's} \\ \textit{private data}}
 & -- 
 & -- & \cmark & \cmark & \cmark & \xmark & -- \\
\hline
\multirow{2}{*}{AQuA-RAT}
 & Acc & 48.43 & 43.70 & \underline{47.64} & 47.24 & \textbf{61.02} & 58.66 \\
 & PGR      & --    & -46.24 & \underline{-7.72} & -11.63 & \textbf{123.07} & --    \\
\hline
\multirow{2}{*}{GSM8K}
 & Acc & 62.62 & 71.72 & \textbf{74.30} & 72.71 & \underline{73.59} & 73.16 \\
 & PGR      & --    & 86.34 & \textbf{110.82} & 95.73 & \underline{104.08} & --    \\
\hline
\multirow{2}{*}{DROP}
 & Acc & 49.36 & 51.18 & \underline{54.18} & 51.87 & \textbf{58.26} & 59.01 \\
 & PGR      & --    & 18.86 & \underline{49.95} & 26.01 & \textbf{92.23} & --    \\
\hline
\multirow{2}{*}{CQA}
 & Acc & 77.40 & 82.64 & \textbf{83.46} & \textbf{83.46} & \underline{83.21} & 83.78 \\
 & PGR      & --    & 82.13 & \textbf{94.98} & \textbf{94.98} & \underline{91.07} & --    \\
\hline
\multirow{2}{*}{SAMSum}
 & R-1 & 47.64 & 49.66 & \underline{49.92} & 49.85 & \textbf{50.52} & 50.59 \\
 & PGR     & --    & 68.47 & \underline{77.29} & 74.92 & \textbf{97.63} & --    \\
\hline
\multirow{2}{*}{DialogSum}
 & R-1 & 46.43 & 47.00 & \underline{47.70} & 46.68 & \textbf{48.37} & 50.92 \\
 & PGR     & --    & 12.69 & \underline{28.29} & 5.57 & \textbf{43.21} & --    \\
\hline
\end{tabular}
\label{tab:1_client}
\end{table}

\begin{table}[!t]
\centering
\scriptsize
\caption{Results of different scales of TinyLM and LLM on AQuA-RAT dataset using accuracy evaluation metric.}
\setlength{\tabcolsep}{2pt}
\renewcommand{\arraystretch}{1.2}
\begin{tabular}{lccccc|c}
\hline
 & \textbf{Client 1} & \textbf{Client 2} & \textbf{Client 3} & \textbf{Client 4} & \textbf{Client 5} & \textbf{Avg.} \\
\hline
$P_S$ (0.5B) & 29.13 & 27.49 & 30.15 & 31.08 & 27.49 & 29.07 \\
$P_S$ (1.5B) & 40.92 & 40.72 & 38.05 & 40.41 & 39.69 & 39.96 \\
$P_S$ (3B)   & 53.95 & 54.26 & 53.64 & 54.15 & 54.26 & 54.05 \\
\hline
Ours (0.5B $\rightarrow$ 7B) & 63.08 & 61.13 & 62.97 & 65.74 & 63.49 & 63.28 \\
Ours (1.5B $\rightarrow$ 7B) & 59.59 & 64.41 & 63.49 & 64.51 & 65.33 & 63.47 \\
Ours (3B $\rightarrow$ 7B)   & 61.74 & 64.31 & 63.90 & 62.87 & 65.03 & 63.57 \\
$P_T$ (7B) & 64.92 & 66.05 & 64.51 & 64.21 & 66.46 & 65.23 \\
\hline
Ours (0.5B $\rightarrow$ 14B) & 68.00 & 65.54 & 67.08 & 69.13 & 68.41 & 67.63 \\
Ours (1.5B $\rightarrow$ 14B) & 67.08 & 65.13 & 67.28 & 70.67 & 68.21 & 67.67 \\
Ours (3B $\rightarrow$ 14B) & 68.41 & 65.85 & 67.59 & 69.33 & 67.49 & 67.73 \\
$P_T$ (14B) & 68.10 & 66.77 & 67.08 & 69.13 & 69.13 & 68.04 \\
\hline
\end{tabular}
\label{tab:ablation_scale}
\end{table}

\textbf{Baselines.} We employ SoTA weak-to-strong distillation methods as baselines: \textbf{(1) W2S} \cite{burns2024weak}, \textbf{(2) Conf} \cite{burns2024weak}, \textbf{(3) VisSup} \cite{guo2024vision}. We also compare our method to \textbf{(4) $P_S$}, which is the TinyLM fine-tuned from the client side, and \textbf{(5) $P_T$} (\textit{ceiling performance}), which is the performance when we fine-tune the LLM directly on the client's data. Prior data-free knowledge distillation methods are not applicable to our experimental setting, as they are designed for image or text classification and do not extend to the complex text generation tasks considered in our experiments. Detailed description of each baseline is in Appx. \ref{appx:baselines}. We also split the training data of considered benchmark datasets into two parts: one to train weak models, and the other for the distillation process.

\textbf{Performance for a Single Client.} In this experiment, we evaluate the performance of \textsc{Grad-Transformer} in generating the LLM update using a single fine-tuned TinyLM. Table \ref{tab:1_client} shows that
the generated LLM update vectors enables the updated LLM to outperform the fine-tuned TinyLM in terms of performance across all tasks. Also, \textsc{Grad-Transformer} achieves an average PGR of 91.88\% compared with 58.94\% of the best-performing baseline (i.e., Conf) registering an improvement of 55.89\% across all the tasks. It is worth noting that \textsc{Grad-Transformer} does not need to access the client's private data compared with the baselines, highlighting its practicability in real-world applications. On some datasets, baselines that directly leverage client model's logits achieve higher performance, since \textsc{Grad-Transformer} only uses on update vectors to preserve data privacy and fail to preserve task-specific information that is more explicitly encoded in model logits.

\begin{table}[!t]
\centering
\caption{Results in the Multiple Client setting. We report the result for each client, and the averaged result across 5 clients. The first six columns report accuracy for AQuA-RAT, CommonsenseQA, DROP and ROUGE-1 for SAMSum. The best and second-best results are highlighted in \textbf{bold} and \underline{underline}, respectively.}
\setlength{\tabcolsep}{4pt}
\renewcommand{\arraystretch}{1.15}
\scriptsize
\begin{tabular}{l|cccccc|c}
\hline
 & \textbf{Client 1} & \textbf{Client 2} & \textbf{Client 3} & \textbf{Client 4} & \textbf{Client 5} & \textbf{Avg.} & \makecell{\textbf{Avg.}\\\textbf{PGR}} \\
\hline 
\rowcolor{datasetgray}
\multicolumn{8}{c}{\textbf{AQuA-RAT}} \\
\hline
$P_S$ & 53.95 & 54.26 & 53.64 & 54.15 & 54.26 & 54.05 & -- \\
W2S & 52.31 & 55.18 & 56.10 & 59.28 & 56.00 & 55.77 & 15.38 \\
Conf & \underline{55.38} & \underline{58.15} & 56.82 & \underline{59.79} & \underline{59.49} & \underline{57.93} & \underline{34.70} \\
VisSup & 54.56 & 57.85 & \underline{57.33} & 59.18 & 59.08 & 57.60 & 31.75 \\
\textbf{Ours} & \textbf{61.74} & \textbf{64.31} & \textbf{63.90} & \textbf{62.87} & \textbf{65.03} & \textbf{63.57} & \textbf{85.15} \\
$P_T$ & 64.92 & 66.05 & 64.51 & 64.21 & 66.46 & 65.23 & -- \\
\hline
\rowcolor{datasetgray}\multicolumn{8}{c}{\textbf{CommonsenseQA}} \\
\hline
$P_S$ & 77.55 & \underline{78.57} & 73.47 & 76.53 & 78.57 & 76.94 & -- \\
W2S & \underline{78.57} & \underline{84.69} & \underline{77.55} & \textbf{82.65} & 78.57 & \underline{80.41} & \underline{68.04} \\
Conf & \underline{78.57} & \textbf{85.71} & 75.51 & 80.61 & \textbf{80.61} & 80.20 & 63.92 \\
VisSup & 78.57 & \textbf{85.71} & 76.53 & \underline{81.63} & \underline{79.59} & \underline{80.41} & \underline{68.04} \\
\textbf{Ours} & \textbf{80.61} & 83.67 & \textbf{83.67} & 79.59 & \textbf{80.61} & \textbf{81.63} & \textbf{91.96} \\
$P_T$ & 80.61 & 86.73 & 81.63 & 81.63 & 79.59 & 82.04 & -- \\
\hline
\rowcolor{datasetgray}\multicolumn{8}{c}{\textbf{DROP}} \\
\hline
$P_S$ & 56.98 & 55.81 & 53.36 & 58.91 & 54.91 & 55.99 & - \\
W2S & 57.24 & 61.24 & 61.63 & 63.57 & \underline{61.63} & 61.06 & 63.85 \\
Conf & \underline{58.91} & \underline{64.21} & \textbf{63.82} & \underline{64.34} & \underline{61.63} & \underline{62.58} & \underline{83.00} \\
VisSup & 56.85 & 63.44 & \underline{62.14} & 63.82 & 61.11 & 61.47 & 69.02 \\
\textbf{Ours} & \textbf{60.72} & \textbf{64.34} & 61.37 & \textbf{64.47} & \textbf{62.53} & \textbf{62.69} & \textbf{84.38} \\
$P_T$ & 64.73 & 64.08 & 60.98 & 63.82 & 66.02 & 63.93 & -- \\
\hline
\rowcolor{datasetgray}\multicolumn{8}{c}{\textbf{SAMSum}} \\
\hline
$P_S$ & 45.32 & 47.70 & 45.31 & 46.12 & 47.25 & 46.34 & -- \\
W2S & 48.02 & 49.26 & 50.11 & 48.06 & 46.39 & 48.37 & 59.71 \\
Conf & \textbf{49.99} & 50.09 & \textbf{51.98} & 48.19 & \underline{48.17} & \textbf{49.68} & \textbf{98.24} \\
VisSup & \underline{48.51} & \textbf{50.60} & \underline{50.35} & \underline{49.14} & 47.47 & \underline{49.21} & \underline{84.41} \\
\textbf{Ours} & 47.92 & \underline{50.50} & 48.83 & \textbf{49.37} & \textbf{48.45} & 49.01 & 78.53 \\
$P_T$ & 49.51 & 49.98 & 48.91 & 50.21 & 50.11 & 49.74 & -- \\
\hline
\end{tabular}
\label{tab:5_client}
\vspace{-5pt}
\end{table}

\textbf{Performance on Multiple Clients.} We focus on evaluating the performance of \textsc{Grad-Transformer} with five different clients. In this setting, the first half that is used as client's private data is split into 5 different subsets, one for each client. Each subset is further split into training and test sets with a 90:10 ratio. Regarding $P_T$, we fine-tune the LLM on the combined data from all five clients and report its performance on the test set of each client. Table \ref{tab:5_client} shows that the average results of \textsc{Grad-Transformer} consistently outperform all baselines, achieving an average PGR of 85.01\% compared with 69.97\% of the best-performing baseline (i.e., Conf) across four datasets. In addition, the generated LLM consistently surpasses the clients’ TinyLMs $P_S$ across all datasets, providing markedly improved inference for the clients. This results strengthen the effectiveness of \textsc{Grad-Transformer} given multiple clients.

\textbf{\textsc{Grad-Transformer} with Differential Privacy (DP)}. In this experiment, we adopt DP-SGD~\cite{abadi2016deep} as the client-side training mechanism $\mathcal{A}$ to provide rigorous DP protection for clients' private datasets. The privacy budget ranges from $\varepsilon=4.0$ to $\varepsilon=0.1$, guaranteeing a very strict privacy protection, and a broken probability of $\delta=10^{-5}$, resulting in an $(\varepsilon,\delta)$-DP-preserving learning mechanism for every client. Figure \ref{fig:dp_sgd_drop} shows that on the DROP dataset, at $\varepsilon=4.0$, the performance of the fine-tuned TinyLM and the generated LLM only drops marginally compared to the models from vanilla fine-tuning. In contrast, at a more strict privacy guarantee level (i.e., $\varepsilon=0.1$), the performance of the TinyLM drops significantly (from 55.59\% to 44.70\%). However, \textsc{Grad-Transformer} maintains a high performance even with noisy DP-SGD update vectors from the TinyLM, i.e., 62.35\%. This consistency is achieved by extensively training \textsc{Grad-Transformer} to generate optimal LLM update vectors, curated from $D_p$, which has a similar distribution to the private dataset, thereby enabling \textsc{Grad-Transformer} to be robust against DP-preserving TinyLM's update vector. Results in other datasets, i.e., AQuA-RAT, CommonsenseQA (Figure \ref{fig:dp_sgd}, Appx. \ref{appx:additional-experiments}), strengthen our observation by showing that \textsc{Grad-Transformer} consistently performs well (88.02\% PGR on average) under DP-SGD fine-tuning of the TinyLMs with $\varepsilon=0.1$.

\begin{figure}[!t]
    \centering
    \includegraphics[width=0.7\linewidth]{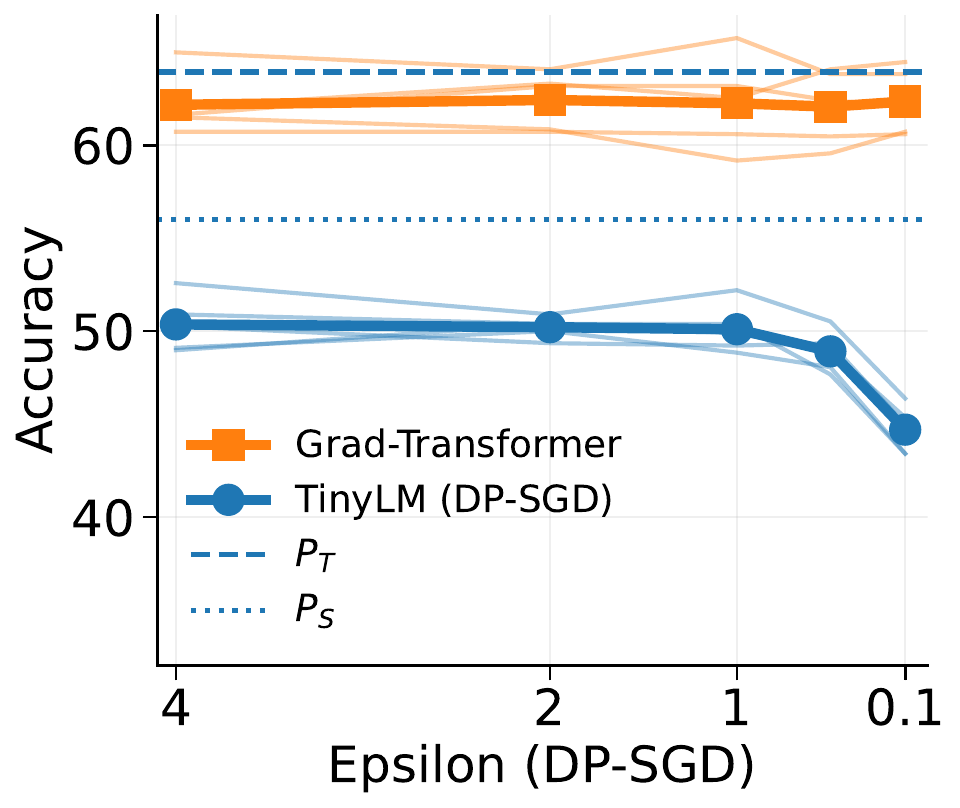}
    \caption{Averaged performance of \textsc{Grad-Transformer} in the 5-client setting with DP-SGD training for clients on the DROP dataset. The light-colored lines show performance in each client.}
    \label{fig:dp_sgd_drop}
\end{figure}

\begin{figure}[!t]
    \centering
    \includegraphics[width=0.65\linewidth]{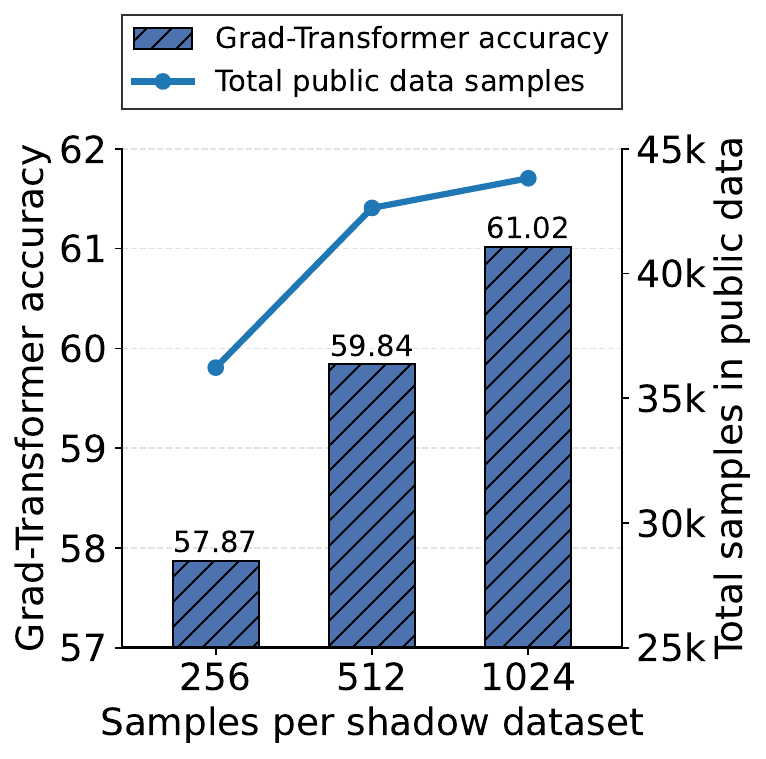}
    \caption{Ablation study on the number of samples in a shadow dataset and \textsc{Grad-Transformer}'s performance on AQuA-RAT.}
    \label{fig:samples_shadow_data}
\vspace{-10pt}
\end{figure}

\begin{table}[!t]
\centering
\caption{Results of different block-wise segmentation strategies on the AQuA-RAT dataset using accuracy evaluation metric.}
\scriptsize
\setlength{\tabcolsep}{2pt}
\renewcommand{\arraystretch}{1.2}
\begin{tabular}{@{}lccccc|c@{}}
\hline
& \textbf{Client 1} & \textbf{Client 2} & \textbf{Client 3} & \textbf{Client 4} & \textbf{Client 5} & \textbf{Avg.} \\
\hline
$P_S$               & 53.95 & 54.26 & 53.64 & 54.15 & 54.26 & 54.05 \\
Q-V separately      & 63.18 & 62.56 & 63.90 & 64.31 & 64.12 & 63.61 \\
Reverse block order & 38.36 & 37.13 & 39.49 & 40.92 & 42.77 & 39.73 \\
Ours                & 61.74 & 64.31 & 63.90 & 62.87 & 65.03 & 63.57 \\
$P_T$               & 64.92 & 66.05 & 64.51 & 64.21 & 66.46 & 65.23 \\
\hline
\end{tabular}
\label{tab:block_ablation}
\end{table}

\textbf{Model Scales.} We explore the effectiveness of \textsc{Grad-Transformer} in scaling the size difference between the TinyLM and the LLM in the following experiments (Table \ref{tab:ablation_scale}): \textbf{(1)} We fix the LLM to be 7B model, and reduce the size of the TinyLM from 3B from previous experiments to 0.5B; and \textbf{(2)} Given these varying TinyLMs, we increase the size of the LLM from 7B from previous experiments to 14B.
In the first experiment, reducing the size of the TinyLM from 3B to 0.5B does not affect the performance of \textsc{Grad-Transformer} in generating the LLM update vectors. In fact, the LLM, i.e., 7B, still perform competitively given different sizes, i.e., 0.5B, 1.5B, 3B, of the TinyLM.
In the second experiment, given these sizes of TinyLMs, the \textsc{Grad-Transformer} can still effectively generate the LLM update vectors with similar performance to the target 14B LLM. Although not effective in scaling the LLM to a 32B model given a 0.5B TinyLM given the current limited size of the \textsc{Grad-Transformer} (Flan-T5-Large as Transformer encoder-decoder model $\varphi$), these promising results show that \textsc{Grad-Transformer} can work effectively even with a 28$\times$ scale up in model size (i.e., 0.5B TinyLM to 14B LLM).

\textbf{Update Vector Partitioning Strategies.} In Table~\ref{tab:block_ablation}, we explore and show the results of alternative update vector partitioning strategies in \textsc{Grad-Transformer} other than our proposed block-wise segmentation. We experiment on two different strategies: \textbf{(1) Q-V separately} splits the update vectors by weight type as a finer grain level in addition to the block-wise partition as follows: One \textsc{Grad-Transformer} is trained on only the block-wise query weight updates, and a second \textsc{Grad-Transformer} is trained on only the block-wise value weight updates. Afterwards, the outputs from these two \textsc{Grad-Transformer}s are merged to form the update vector for the LLM. Note that in the proposed method, we concatenate query and value weight updates of each block to train a single \textsc{Grad-Transformer}. 
This method gives similar performance to our proposed method (i.e., only a 0.04 difference in performance). However, this method is not scalable and takes up significantly more computing since we need to train multiple \textsc{Grad-Transformer}s for each small component of each block. \textbf{(2) Reverse block order} reverses the block ordering of the weight update vectors before feeding them to \textsc{Grad-Transformer}. This reversed block ordering disrupts the natural sequential dependency between layers. Performance drops further to 40.31\%, confirming that the autoregressive decoder relies on the correct layer ordering to capture inter-block correlations. These results demonstrate that our block-wise segmentation strategy, which preserves both intra-block weight correlations and the inter-block sequential structure, is critical to \textsc{Grad-Transformer}'s effectiveness.

\textbf{Shadow Dataset.} Figure~\ref{fig:samples_shadow_data}  reports the performance of \textsc{Grad-Transformer} on the AQuA-RAT dataset as the number of samples in each shadow dataset $\tilde{D}_i$ varies. We can observe that the performance of \textsc{Grad-Transformer} increases monotonically with the number of samples per shadow dataset and the total number of samples in the public dataset $D_p$. This result is consistent with our theoretical analysis (Lemma~\ref{lemma:generalize_in_shadow_distribution}). Performance is highest at 1,024 samples (i.e., 61.02\%).

\textbf{Update Vector Tuples $D_\Delta$}. Figure \ref{fig:shadow_pairs} (Appx~\ref{appx:additional-experiments}) shows the performance of \textsc{Grad-Transformer} on AQuA-RAT when using different numbers of update tuples. By varying the number of update vector tuples during training, we examine how many are needed for the \textsc{Grad-Transformer} to reach its optimal performance. We observe that at 1,000 update vector tuples, the performance of \textsc{Grad-Transformer} has converged to its optimal value.


\begin{figure}[t]
    \centering
    \includegraphics[width=0.95\linewidth]{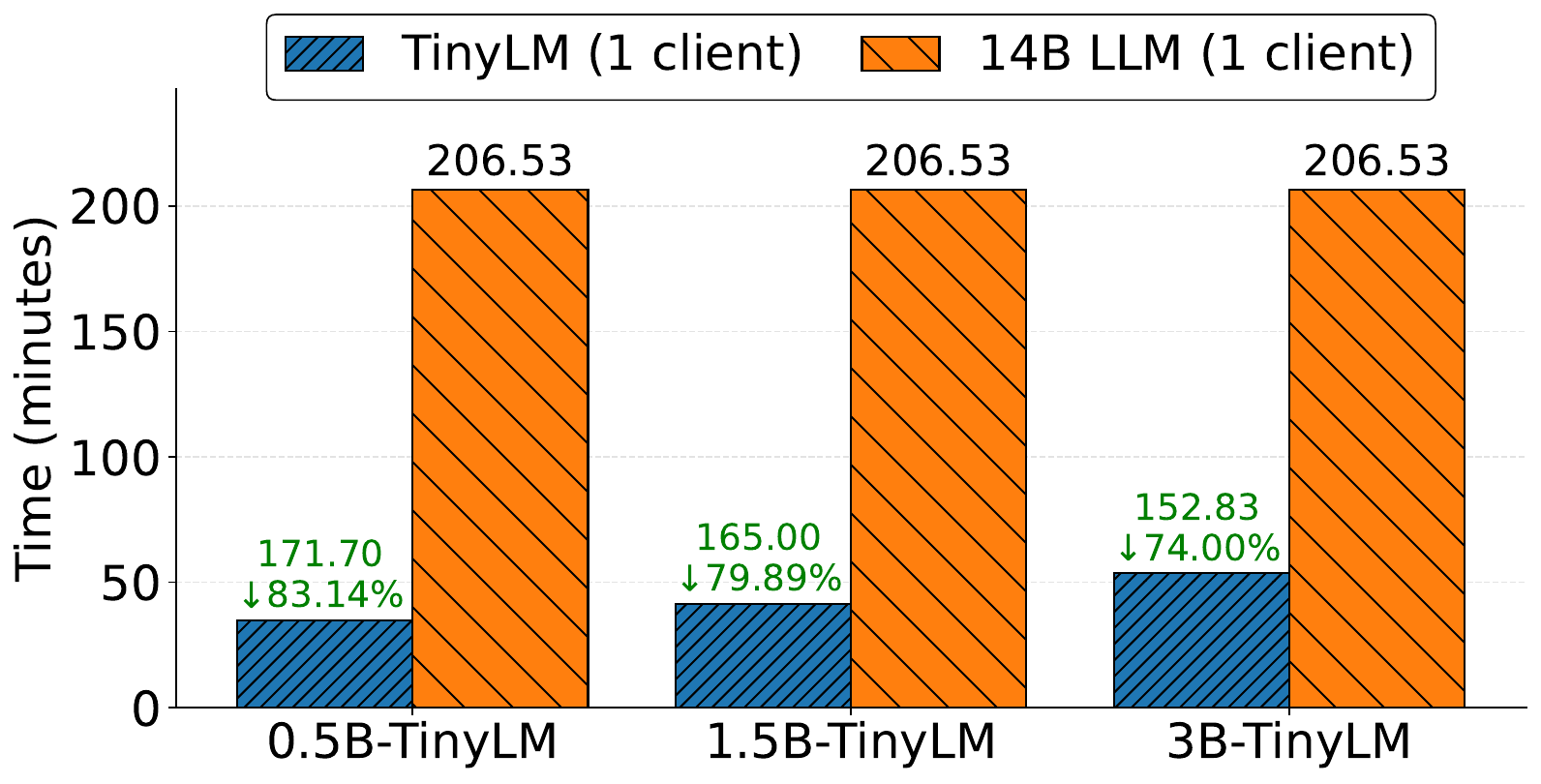}
    \caption{Time consumption (minutes) when fine-tuning 0.5B, 1.5B or 3B TinyLM compared to fine-tuning a 14B LLM for 1 client on AQuA-RAT dataset in the experiments in Table \ref{tab:ablation_scale}.
    }
    \label{fig:aqua_rat_time}
\vspace{-15pt}
\end{figure}

\textbf{Time Consumption.} We explore the time efficiency that \textsc{Grad-Transformer} offers for clients. All time consumption analysis are conducted with 1$\times$ NVIDIA A100 80GB GPU. Using \textsc{Grad-Transformer} to update a 14B LLM from update vectors of a 0.5B TinyLM (as in the experiments in Table \ref{tab:ablation_scale}) provides remarkable efficiency gains compared to directly fine-tuning the 14B LLM (Figure \ref{fig:aqua_rat_time}). In fact, the fine-tuning time is reduced by 83.14\%, representing a substantial efficiency gain for clients. Note that the training cost of the \textsc{Grad-Transformer} is marginal since it only needs 1,000 update tuples to reach optimal performance (Figure \ref{fig:shadow_pairs}, Appx. \ref{appx:additional-experiments}).

\section{Conclusions}

In this paper, we propose a new framework to generate update vectors of LLMs based on update vectors of TinyLMs fine-tuned on private data. To achieve our goal, we develop a novel \textsc{Grad-Transformer} architecture trained on update vectors derived by fine-tuning TinyLMs and LLMs on public shadow datasets. \textsc{Grad-Transformer} enables multi-organization collaboration to jointly update LLMs, improving performance and cost-efficiency. Extensive experiments show that \textsc{Grad-Transformer} outperforms SoTA knowledge distillation baselines, even under strict differential privacy protection.

\section*{Impact Statement}

In developing \textsc{Grad-Transformer}, no human subjects were involved, so there are no ethical concerns related to data privacy. This paper provides a framework to support multi-organization collaboration to jointly update LLMs. This framework preserves privacy by ensuring that all private data remains on clients’ local devices, eliminating the need to share it with external servers. Our work contributes to the development of secure and trustworthy AI, ensuring a safer, more efficient pipeline for fine-tuning LLMs from TinyLMs.

\section*{Reproducibility Statement}

We have provided sufficient implementations details  in the main paper and the appendix to ensure reproducibility of the results presented in this paper. Our code is available at: \url{https://github.com/nguyenrtm/grad-transformer}

\bibliographystyle{icml2026}
\bibliography{custom}

\newpage
\appendix
\onecolumn
\section{Datasets and Evaluation Metrics}
\subsection{Dataset descriptions}
\label{appx:datasets}

\begin{table}[!ht]
\centering
\caption{Datasets, tasks, and evaluation metrics in experiments.}
\label{tab:datasets}
\renewcommand{\arraystretch}{1.1}
\small
\begin{tabular}{ccc}
\toprule
\textbf{Dataset} & \textbf{Task} & \textbf{Metric} \\
\midrule
AQuA-RAT        & Math Reasoning        & Acc (EM) \\
GSM8K           & Math Reasoning        & Acc (EM) \\
CommonsenseQA   & Commonsense Reasoning & Acc (EM) \\
DROP            & Discrete Reasoning    & Acc (EM) \\
SAMSum          & Dialogue Summarization & ROUGE-1 \\
DialogSum       & Dialogue Summarization & ROUGE-1 \\
\bottomrule
\addlinespace[0.1em]
\multicolumn{3}{r}{\textit{Acc: Accuracy, EM: Exact Match}}
\end{tabular}
\end{table}

The detailed descriptions of six datasets used in our experiments are below:
\begin{itemize}
    \item \textbf{AQuA-RAT} (Algebra Question Answering with Rationales) \cite{ling2017program}: AQuA-RAT contains algebraic word problems paired with natural language rationales. Each instance includes a problem description in natural language and five multiple-choice options, exactly one of which is correct. The ground-truth annotation provides both a textual explanation of the solution and the correct answer choice. The dataset comprises 97.5k training samples and 254 validation samples.
    \item \textbf{GSM8K} (Grade School Math 8K) \cite{cobbe2021training}: GSM8K is a collection of high-quality, linguistically diverse grade-school mathematics word problems designed to evaluate question answering with multi-step numerical reasoning. Each problem typically requires between two and eight reasoning steps and is solved through a sequence of elementary arithmetic operations. Solutions are expressed in natural language. The dataset contains 7.5k training samples and 1.3k test samples.
    \item \textbf{CommonsenseQA} \cite{talmor2019commonsenseqa}: CommonsenseQA is a multiple-choice question answering dataset that requires reasoning over diverse types of commonsense knowledge. Each question is paired with five answer options, exactly one of which is correct. The dataset consists of 9.7k training samples and 1.2k validation samples.
    \item \textbf{DROP} (Discrete Reasoning Over Paragraphs) \cite{dua2019drop}: DROP is a crowdsourced, adversarially constructed reading comprehension dataset that requires models to resolve references within a question, potentially across multiple spans in the input paragraph, and to perform discrete reasoning operations such as addition, counting, and sorting. These operations demand deeper semantic understanding of paragraph content than earlier reading comprehension benchmarks. The dataset contains 77.4k training samples and 9.54k validation samples.
    \item \textbf{SAMSum} \cite{gliwa2019samsum}: SAMSum is a dialogue summarization dataset consisting of messenger-style conversations paired with human-written summaries. The conversations were authored by linguists fluent in English to reflect everyday messaging behavior and cover a realistic distribution of topics. The language style varies from informal to formal and may include slang, emoticons, and typographical errors. Each dialogue is annotated with a concise third-person summary capturing the main content of the conversation. The dataset includes 14.7k training samples and 818 validation samples.
    \item \textbf{DialogSum} \cite{chen2021dialogsum}: DialogSum is a large-scale dialogue summarization dataset comprising multi-turn conversations annotated with manually written summaries and topic labels. The dataset contains 12.5k training samples and 500 validation samples.
\end{itemize}

\subsection{Evaluation metrics}

For reasoning and question answering datasets AQuA-RAT, GSM8K, CommonsenseQA, DROP, we use exact match accuracy as the evaluation metric. Precisely, we prompt the model to generate the final answer at the end after its reasoning, and we compare this answer to the ground truth. If this generated answer exactly matches the ground truth label, we count it as the correct answer.

For dialogue summarization datasets SAMSum and DialogSum, we use Recall-Oriented Understudy for Gisting Evaluation score (ROUGE score) \cite{lin2004rouge}, which is a standard evaluation metric for summarization tasks. Specifically, we use ROUGE-1. The metric compares the generated summaries to a ground truth summaries, the higher scores
show closer overlapping between them.

Additionally, for each experiment, we use performance gap recovered (PGR), a standard metric used to evaluate weak-to-strong distillation methods \cite{burns2024weak}, as described in the paper.

\section{Theoretical Analysis} \label{appx:theoretical-analysis}


\paragraph{Setting.} We consider a single down-stream task $\tau$ across $N$ clients' datasets, which induce a probability distribution $\mu$ over the data sample space $\mathcal{Z}$, i.e., $\forall i\in[N], \forall z \in D_i: z \sim \mu$. Similarly, the shadow dataset $D_p$ is equipped with a distribution $\tilde{\mu}$. For each subset $\tilde{D}_k \subset D_p$, we assume that each sample $z \in \tilde{D}_k$ is drawn i.i.d from the distribution $\tilde{\mu}$. Given a subset $\tilde{D}_k$, we denote $\phi_k = (\tilde{\theta}^*_{S,k}, \tilde{\theta}^*_{T,k})$ as the tuple of shadow models trained from $\tilde{D}_k$ using the learning mechanism $\mathcal{A}$, i.e., $\tilde{\theta}^*_{S,k} = \mathcal{A}(\tilde{D}_k, \theta_S^0)$ and $\tilde{\theta}^*_{T,k} = \mathcal{A}(\tilde{D}_k, \theta_T^0)$. It's worth noting that the learning mechanism $\mathcal{A}$ is typically a stochastic learning mechanism, resulting in the fact that $\tilde{\theta}^*_{S,k}$ and $\tilde{\theta}^*_{T,k}$ follow the conditional distributions $P(\theta_S|\tilde{D}_k)$ and $P(\theta_T|\tilde{D}_k)$, respectively.

\subsection{Proof of Lemma \ref{lemma:generalize_in_shadow_distribution}}
We are interested in the generalization of $\mathcal{M}$ across the distribution $\mu$ which is defined by:
\begin{align}
    \texttt{Gen}(w) = R_{\tilde{\mu}}(w)- R_{D_p}(w),
\end{align}
namely the difference between the empirical risk on the shadow dataset $D_p$ and the population risk of $\mathcal{M}$ under the distribution $\tilde{\mu}$. We seek an upper bound for this metric to derive a theoretical guarantee on the generalization of $\mathcal{M}$, from which we can extend to the client's distribution $\mu$ based on the divergence between $\tilde{\mu}$ and  $\mu$.

\begin{lemma} \label{lemma:generalize_in_shadow_distribution_appx}
    Suppose $\ell(\cdot, \cdot)$ follows a $\sigma$-subgaussian distribution. Then, in expectation over parameter space $\mathcal{W}$ and the sampling process of $D_p$ from $\tilde{\mu}$, the generalization of $w$ is bounded as follows:
    \begin{align}
        |\mathbb{E}_{w, D_p}\texttt{Gen}(w)| \leq 2\sqrt{\frac{\sigma^2I(w, D_p)}{2|D_p|}}.
    \end{align}
\end{lemma}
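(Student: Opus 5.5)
This is the Xu--Raginsky mutual-information generalization bound, proved through the Donsker--Varadhan variational representation and sub-Gaussian concentration. First I make the setting precise. Write $n=|D_p|$ and $D_p=(z_1,\dots,z_n)$ with $z_i\stackrel{\text{iid}}{\sim}\tilde{\mu}$, which the setting paragraph allows. Define the per-sample loss of $\mathcal{M}$ as
\[
f(w,z)=\mathbb{E}_{\theta_S}\,\ell\bigl(z,\theta_T^0+\mathcal{M}_w(\Delta\theta_S)\bigr).
\]
With this, $R_{D_p}(w)=\frac1n\sum_i f(w,z_i)$ after normalizing the sum by $n$, and $R_{\tilde{\mu}}(w)=\mathbb{E}_{z\sim\tilde{\mu}}f(w,z)$. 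I read the $\sigma$-subgaussian assumption as: $f(w,Z)$ is $\sigma$-subgaussian under $Z\sim\tilde{\mu}$ for every $w$. Then the empirical mean $F(w,D_p)=\frac1n\sum_i f(w,z_i)$ is $\sigma/\sqrt n$-subgaussian under the product measure $\tilde{\mu}^{\otimes n}$.

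Next, decouple. Let $\bar w$ and $\bar D_p$ be independent copies with the same marginals $P_w$ and $\tilde{\mu}^{\otimes n}$. Since $\mathbb{E}_{\bar D_p}F(\bar w,\bar D_p)=R_{\tilde{\mu}}(\bar w)$ and $\bar w\sim P_w$, we get
\[
\mathbb{E}\,\texttt{Gen}(w)=\mathbb{E}_{P_w\otimes P_{D_p}}[F]-\mathbb{E}_{P_{w,D_p}}[F].
\]
So the quantity to bound is the gap in the expectation of $F$ between the joint law and the product of marginals.

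Now apply Donsker--Varadhan. For any $\lambda\in\mathbb{R}$,
\[
I(w;D_p)=KL(P_{w,D_p}\|P_w\otimes P_{D_p})\ge \lambda\,\mathbb{E}_{P_{w,D_p}}[F]-\log\mathbb{E}_{P_w\otimes P_{D_p}}e^{\lambda F}.
\]
Subgaussianity of $F(\bar w,\cdot)$ for each fixed $\bar w$, averaged over $\bar w$, bounds the log-MGF by $\lambda\,\mathbb{E}_{P_w\otimes P_{D_p}}[F]+\lambda^2\sigma^2/(2n)$. This gives
\[
\lambda\bigl(\mathbb{E}_{P_{w,D_p}}F-\mathbb{E}_{P_w\otimes P_{D_p}}F\bigr)\le I(w;D_p)+\frac{\lambda^2\sigma^2}{2n}\quad\text{for all }\lambda .
\]
Optimizing over $\lambda$ of both signs (the nonnegative-discriminant argument) yields $|\mathbb{E}\,\texttt{Gen}|\le\sqrt{2\sigma^2 I(w;D_p)/n}=2\sqrt{\sigma^2 I(w;D_p)/(2n)}$, which is exactly the stated form.

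The main obstacle is bookkeeping rather than analysis. The paper's $R_{D_p}$ is written as an unnormalized sum over client data, and the inner expectation over $\theta_S\sim P(\theta_S\mid D)$ sits inside the risk. I would address this in two ways. First, I state explicitly that the empirical risk is normalized by $|D_p|$. Second, I absorb the $\theta_S$ randomness into $f$, which is legitimate because that randomness is independent of the evaluation sample $z$, so $f(w,\cdot)$ remains a fixed function of $z$ given $w$. I also need the subgaussian assumption to hold for this averaged loss. Averaging a $\sigma$-subgaussian family over $\theta_S$ preserves $\sigma$-subgaussianity by Jensen applied to the MGF, so assuming it pointwise in $\theta_S$ suffices. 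Finally, I should flag that the stated bound is loose by a factor $\sqrt 2$ relative to the sharp constant only if one uses a cruder one-sided argument; with the two-sided optimization above, the constant $2\sqrt{\cdot/2}$ matches directly.
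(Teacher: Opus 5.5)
You follow the paper's own route: Donsker--Varadhan applied to $KL(P_{w,D_p}\|P_w\otimes P_{D_p})$ with a test function built from the empirical risk, the decoupling identity $\mathbb{E}_{P_w\otimes P_{D_p}}F=\mathbb{E}_{w}R_{\tilde\mu}(w)$, a sub-Gaussian bound on the log-MGF, and optimization over $\lambda$ of both signs. The step that fails is your claim that sub-Gaussianity of $F(\bar w,\cdot)$ for each fixed $\bar w$, averaged over $\bar w$, yields $\log\mathbb{E}_{P_w\otimes P_{D_p}}e^{\lambda F}\le\lambda\,\mathbb{E}_{P_w\otimes P_{D_p}}[F]+\lambda^2\sigma^2/(2n)$. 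Under your reading of the hypothesis (sub-Gaussian in $z$ for each fixed $w$, with $w$-dependent mean $R_{\tilde\mu}(w)$), averaging only gives
\[
\log\mathbb{E}_{P_w\otimes P_{D_p}}e^{\lambda F}\le\frac{\lambda^2\sigma^2}{2n}+\log\mathbb{E}_{\bar w}e^{\lambda R_{\tilde\mu}(\bar w)}.
\]
By Jensen the last term is at least $\lambda\,\mathbb{E}_{\bar w}R_{\tilde\mu}(\bar w)$, strictly so whenever the population risk varies with $w$. The excess is the cumulant generating function of $R_{\tilde\mu}(\bar w)$ under $P_w$, and it does not shrink with $n$.

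For a concrete case, let $w$ be uniform on two points whose losses are the constants $0$ and $1$. Each $F(\bar w,\cdot)$ is then $\sigma$-sub-Gaussian for every $\sigma\ge 0$. Yet $\log\mathbb{E}e^{\lambda F}-\lambda\,\mathbb{E}F=\log\cosh(\lambda/2)$, which exceeds $\lambda^2\sigma^2/(2n)$ for small $\lambda$ whenever $\sigma^2/n<1/4$. So the inequality you feed into the discriminant argument is not established. The paper's proof makes exactly the same move when it asserts that $R_{D_p}(\tilde w)$ is $\sigma/\sqrt{|D_p|}$-sub-Gaussian jointly in $(\tilde w,D_p)$ ``due to linearity.'' This is therefore a gap you share with the paper, not a departure from it.

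The repair is short and keeps your constant. Apply Donsker--Varadhan to the centered test function $g(w,D_p)=\lambda\bigl(R_{\tilde\mu}(w)-F(w,D_p)\bigr)$ instead of $\lambda F$. Its expectation under $P_{w,D_p}$ is $\lambda\,\mathbb{E}\,\texttt{Gen}(w)$. Under $P_w\otimes P_{D_p}$, the inner expectation over $\bar D_p$ of $e^{g(\bar w,\bar D_p)}$ is at most $e^{\lambda^2\sigma^2/(2n)}$ for every fixed $\bar w$, precisely because the $\bar w$-dependent mean has been subtracted, and that bound survives averaging over $\bar w$. This gives $\lambda\,\mathbb{E}\,\texttt{Gen}(w)\le I(w;D_p)+\lambda^2\sigma^2/(2n)$ for all $\lambda\in\mathbb{R}$, and your discriminant step then finishes with the stated constant.

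Equivalently, you can condition on $w$. Write $I(w;D_p)=\mathbb{E}_w KL(P_{D_p\mid w}\|P_{D_p})$, bound each conditional gap by $\sqrt{2\sigma^2 KL(P_{D_p\mid w}\|P_{D_p})/n}$, and pass the expectation inside the square root by Jensen.

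Separately, you use independence of the $\theta_S$ randomness from the evaluation sample to define $f$. That is an assumption you are imposing, not something the paper's risk definitions guarantee, since they draw $\theta_S$ conditionally on a dataset. The paper's proof tacitly adopts the same reading, so this is acceptable, but you should state it as a hypothesis.
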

\begin{proof}
    Since $w$ is trained on $D_p$, $w$ is dependent on $D_p$, i.e., $w\sim P(w|D_p)$. We denote $\tilde{w} \in \mathcal{W}$ as a copy of $w$ such that $\tilde{w}$ is independent from $D_p$ by marginalizing over $D_p$, i.e., $\tilde{w} \sim \mathbb{E}_{D_p}P(w | D_p)$. We define for any $\lambda \in \mathbb{R}$:
    \begin{align}
        \psi_{\tilde{w}, D_p}(\lambda) &= \log\mathbb{E}_{\tilde{w}, D_p}\Big[e^{\lambda\Big(R_{D_p}(\tilde{w}) - \mathbb{E}_{\tilde{w}, D_p}[R_{D_p}(\tilde{w})]\Big)}\Big] \\
        &= \log\mathbb{E}_{\tilde{w}, D_p}\Big[e^{\lambda R_{D_p}(\tilde{w})}\Big] - \lambda \mathbb{E}_{\tilde{w}, D_p}[R_{D_p}(\tilde{w})].
    \end{align}
    By deriving the mutual information between $w$ and $D_p$ using Donsker-Varadhan representation \cite{kldivergen-DVrep} with a measurable function $g$, we have:
    \begin{align}
        I(w, D_p) &= D_{KL}(P_{w, D_p}\|P_wP_{D_p}) \\
        &= \sup_g\Big\{\mathbb{E}_{w, D_p}[g(w, D_p)] - \log\mathbb{E}_{\tilde{w},D_p}\Big[e^{g(\tilde{w}, D_p)}\Big]\Big\} \\
        &\geq \lambda\mathbb{E}_{w, D_p}[R_{D_p}(w)] - \log\mathbb{E}_{\tilde{w}, D_p}[e^{\lambda R_{D_p}(\tilde{w})}], \forall\lambda\in\mathbb{R} \\
        &= \lambda\mathbb{E}_{w, D_p}[R_{D_p}(w)] - \lambda\mathbb{E}_{\tilde{w}, D_p}[R_{D_p}(\tilde{w})] - \psi_{\tilde{w}, D_p}(\lambda). \label{eq:derived-mutual-information-appx}
    \end{align}
    It's worth noting that $D_p$ is i.i.d sampled from $\tilde{\mu}$, i.e., $\forall z\in D_p: z\sim\tilde{\mu}$, resulting in $D_p$ also follows the distribution $\tilde{\mu}$. We then analyze $\lambda\mathbb{E}_{\tilde{w}, D_p}[R_{D_p(\tilde{w})}]$, as follows:
    \begin{align}
        \lambda\mathbb{E}_{\tilde{w}, D_p}[R_{D_p(\tilde{w})}] &= \lambda\mathbb{E}_{\tilde{w}}\mathbb{E}_{D_p}[R_{D_p}(\tilde{w})] = \lambda\mathbb{E}_{\tilde{w}}\Big[\frac{1}{\tilde{m}}\sum_{i=1}^{\tilde{m}}\mathbb{E}_{D_p}\ell(\tilde{\theta}_T, z_i)\Big] \\
        &= \lambda\mathbb{E}_{\tilde{w}}\Big[\frac{1}{\tilde{m}}\sum_{i=1}^{\tilde{m}}\mathbb{E}_{z_i \sim \tilde{\mu}}\ell(\tilde{\theta}_T, z_i)\Big] = \lambda\mathbb{E}_{\tilde{w}}[R_{\tilde{\mu}}(\tilde{w})] \\
        &= \lambda\mathbb{E}_{w, D_p}[R_{\tilde{\mu}}(w)], \label{eq:derived-tilde-w-appx}
    \end{align}
    where the last equation is conditionalized on the distribution of $\tilde{w}$, i.e., $\tilde{w} \sim \mathbb{E}_{D_p}P(w| D_p)$. If we adopt Eq.\eqref{eq:derived-tilde-w-appx} into Eq.\eqref{eq:derived-mutual-information-appx}, we have that:
    \begin{align}
        -\lambda\Big(\mathbb{E}_{w, D_p}[R_{\tilde{\mu}}(w)] - \mathbb{E}_{w, D_p}[R_{D_p}(w)]\Big) &\leq I(w, D_p)  - \psi_{\tilde{w}, D_p}(\lambda), \quad \forall \lambda \in \mathbb{R} \\
        \mathbb{E}_{w, D_p}[R_{\tilde{\mu}}(w)] - \mathbb{E}_{w, D_p}[R_{D_p}(w)] &\leq \frac{I(w, D_p) + \psi_{\tilde{w}, D_p}(-\lambda)}{\lambda}, \quad \forall\lambda > 0 \label{eq:mutual_information_and_cgf_appx}
    \end{align}
    By the assumption that $\ell(\cdot, \cdot)$ follows a $\sigma$-subgaussian, we can have that $R_{D_p}(\tilde{w})$ follows $\frac{\sigma}{\sqrt{|D_p|}}$-subgaussian. Since $\psi_{\tilde{w}, D_p}(\lambda)$ is the cummulant generative function of random variable $R_{D_p}(\tilde{w}) - \mathbb{E}_{\tilde{w}, D_p}[R_{D_p}(\tilde{w})]$, which is also $\frac{\sigma}{\sqrt{|D_p|}}$-subgaussian due to linearity, we have that:
    \begin{align}
        \psi_{\tilde{w}, D_p}(\lambda)\leq \frac{\lambda^2\sigma^2}{2|D_p|},\quad\forall\lambda\in\mathbb{R}.\label{eq:sub-gausian-prop-appx}
    \end{align}
    Adopting Eq.\eqref{eq:sub-gausian-prop-appx}, we have that:
    \begin{align}
        \mathbb{E}_{w, D_p}[R_{\tilde{\mu}}(w)] - \mathbb{E}_{w, D_p}[R_{D_p}(w)] &\leq \frac{I(w, D_p)}{\lambda}  + \frac{\lambda\sigma^2}{2|D_p|}, \quad \forall\lambda > 0.
    \end{align}
    By applying Cauchy-Schwarz inequality, we can choose $\lambda>0$ that minimizes the right-hand side of the equation. Therefore,
    \begin{align}
        \mathbb{E}_{w, D_p}\Big[R_{\tilde{\mu}}(w)] - R_{D_p}(w)\Big] &\leq 2\sqrt{\frac{\sigma^2I(w, D_p)}{2|D_p|}}.
    \end{align}
    If we choose $\lambda < 0$ in Eq. \eqref{eq:mutual_information_and_cgf_appx}, similar analysis results in:
    \begin{align}
        \mathbb{E}_{w, D_p}\Big[R_{\tilde{\mu}}(w)] - R_{D_p}(w)\Big] &\geq -2\sqrt{\frac{\sigma^2I(w, D_p)}{2|D_p|}}.
    \end{align}
    Thus, we have that:
    \begin{align}
        \Big|\mathbb{E}_{w, D_p}\Big[R_{\tilde{\mu}}(w)
        - R_{D_p}(w)\Big] \Big| &\leq 2\sqrt{\frac{\sigma^2I(w, D_p)}{2|D_p|}},
    \end{align}
    which concludes the proof.
\end{proof}

\subsection{Proof of Theorem \ref{theo:utility_bound}}

\begin{theorem}
    Suppose $\ell(\cdot, \cdot)$ follows a $\sigma$-subgaussian distribution. Then, in expectation over parameter space $\mathcal{W}$ and the sampling process of $D_p$ from $\tilde{\mu}$, the utility bound of $w$ on target dataset $D$ is bounded as follows:
    \begin{align}
        \Big|\mathbb{E}_{w, D_p}\Big[R_{D}(w) - R_{D_p}(w)\Big] \Big| &\leq 2\sqrt{\frac{\sigma^2[I(w, D_p) + KL(\tilde{\mu}\|\mu)]}{2|D|}}.
    \end{align}
\end{theorem}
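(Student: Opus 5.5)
The plan is to reuse the Donsker--Varadhan template from the proof of Lemma~\ref{lemma:generalize_in_shadow_distribution_appx}, with a change of measure from $\tilde{\mu}$ to $\mu$ built into the variational step. That step is what produces the extra $KL(\tilde{\mu}\|\mu)$ term.

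\textbf{Step 1: splitting the utility gap.} I will write
\begin{align}
R_D(w) - R_{D_p}(w) = \big(R_D(w) - R_{\mu}(w)\big) + \big(R_{\mu}(w) - R_{D_p}(w)\big). \nonumber
\end{align}
In the first term, $w$ is independent of $D$, because $D$ plays no role in training $\mathcal{M}$. Since $D$ is i.i.d.\ from $\mu$, this term vanishes in expectation over $D$. If the statement is read with $D$ fixed, I will instead treat it as a $\sigma/\sqrt{|D|}$-subgaussian fluctuation.

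\textbf{Step 2: controlling $\mathbb{E}_{w,D_p}[R_\mu(w) - R_{D_p}(w)]$ by a single variational inequality.} Let $\tilde{w}$ be the independent copy of $w$, as in the lemma. Consider the joint law $P_{w,D_p}$ and the product reference law $Q = P_{\tilde{w}} \otimes \mu^{\otimes}$, in which data are drawn from $\mu$ rather than $\tilde{\mu}$. The chain rule for KL divergence gives
\begin{align}
D_{KL}(P_{w,D_p}\|Q) = I(w;D_p) + D_{KL}(\tilde{\mu}^{\otimes}\|\mu^{\otimes}). \nonumber
\end{align}
The second term is a multiple of $KL(\tilde{\mu}\|\mu)$.

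Next I will apply Donsker--Varadhan with $g = \lambda R(\cdot)$, evaluating the empirical risk on a sample of size $|D|$. This yields
\begin{align}
\lambda\,\mathbb{E}_P[R] - \lambda\,\mathbb{E}_Q[R] - \psi_Q(\lambda) \le I(w;D_p) + KL(\tilde{\mu}\|\mu). \nonumber
\end{align}
Under $Q$, the mean of the empirical risk equals $\mathbb{E}_{\tilde w}[R_\mu(\tilde w)]$. Subgaussianity of $\ell$ then gives $\psi_Q(\lambda) \le \lambda^2\sigma^2/(2|D|)$.

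To finish the step, I will optimize over $\lambda>0$ as in the lemma, which gives the one-sided bound $2\sqrt{\sigma^2[I+KL]/(2|D|)}$. Repeating the argument with $\lambda<0$ gives the reverse inequality.

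\textbf{Step 3: assembling the bound.} Combining the two terms from Step~1 gives the claimed absolute-value bound.

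\textbf{Main obstacle: reconciling sample sizes and the i.i.d.\ structure.} The denominator in the statement is $|D|$, while the shadow data has size $|D_p|$. Tensorizing KL over $|D_p|$ samples would give $|D_p|\,KL(\tilde{\mu}\|\mu)$, not $KL(\tilde{\mu}\|\mu)$.

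My first attempt will be to run the change of measure per sample, on a single-sample risk, in the style of Bu et al. Concretely, I will bound $|\mathbb{E}\,\ell(z,\cdot)_{\tilde{\mu}} - \mathbb{E}\,\ell(z,\cdot)_{\mu}|$ by $\sqrt{2\sigma^2 KL(\tilde{\mu}\|\mu)}$. I will then absorb the sample size through the subgaussian parameter of the empirical average over $D$, which is the quantity that actually fluctuates at inference time.

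If the constants do not come out exactly, I will accept a bound of the same form up to the factor $2$. That factor is already loose, since it comes from using $I/\lambda + \lambda\sigma^2/(2n)$ and then merging the two square roots via $\sqrt{a}+\sqrt{b}\le\sqrt{2(a+b)}$.
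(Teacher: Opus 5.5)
\textbf{There is a genuine gap, and it sits exactly at the obstacle you flag.} Your Step~2 has the same skeleton as the paper's proof: Donsker--Varadhan against a reference law in which the data come from $\mu$, the chain rule to get $I(w;D_p)$ plus a KL term, a subgaussian CGF, and optimization over $\lambda$. Your proposed repair does not get past the obstacle.

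The Donsker--Varadhan test function must be a function of $(w,D_p)$. So whatever sample size $n$ your $g=\lambda R$ uses, the reference law costs $n\,KL(\tilde\mu\|\mu)$, and after optimizing $\lambda$ the shift enters as $KL(\tilde\mu\|\mu)$ itself, never divided by a sample size.

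Your per-sample variant correctly bounds the bias $\mathbb{E}_w[R_\mu(w)-R_{\tilde\mu}(w)]$ by $\sqrt{2\sigma^2KL(\tilde\mu\|\mu)}$. But that bias cannot then be ``absorbed through the subgaussian parameter of the empirical average over $D$''. That parameter governs the zero-mean fluctuation $R_D(w)-R_\mu(w)$, which your Step~1 already removed from the expected gap; it does not govern a deterministic offset. Likewise, $I(w;D_p)$ controls $R_\mu-R_{D_p}$ and is normalized by $|D_p|$ (as in Lemma~\ref{lemma:generalize_in_shadow_distribution}), not by $|D|$.

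So the mismatch is in the $|D|$-scaling, not in a constant, and settling for a bound ``of the same form up to the factor 2'' does not prove the statement. In the simplified model where the generated LLM depends on the data only through $w$, the change of measure with $n=|D_p|$ legitimately gives
$\bigl|\mathbb{E}_{w,D_p}[R_\mu(w)-R_{D_p}(w)]\bigr|\le\sqrt{2\sigma^2\,(I(w;D_p)/|D_p|+KL(\tilde\mu\|\mu))}$.

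\textbf{No repair can give the displayed inequality, because it is false as written.} Let $w$ be deterministic, so $I(w;D_p)=0$. Let $\ell(z,\theta)=z\in\{0,1\}$ with $\mu=\mathrm{Ber}(1/2)$ and $\tilde\mu=\mathrm{Ber}(1/2+\epsilon)$; this loss is $\tfrac12$-subgaussian. Averaged over $D$, the left side equals $\epsilon$ for every $|D|$. Meanwhile $KL(\tilde\mu\|\mu)=2\epsilon^2(1+O(\epsilon^2))$, so the right side is $\epsilon(1+O(\epsilon^2))/\sqrt{|D|}$, which is smaller once $|D|\ge 2$ and $\epsilon$ is small. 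For a fixed $D$ it fails even with $\tilde\mu=\mu$, since the right side is then $0$ while $R_D\ne R_\mu$ in general.

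The paper's proof reaches the stated form only through three steps that do not hold:
\begin{enumerate}
\item It writes $KL(P_{D_p}\|\mu)=KL(\tilde\mu\|\mu)$, ignoring the tensorization you noticed.
\item It evaluates $g(w,D)$ under $P_{w,D_p}$ although $D$ is independent of $(w,D_p)$, so the variational inequality carries no information about $R_{D_p}$.
\item It centers $\psi_{\tilde w,\mu}$ at $\mathbb{E}[R_{D_p}(\tilde w)]$ instead of $\mathbb{E}[R_D(\tilde w)]$. Requiring $\psi_{\tilde w,\mu}(\lambda)\le\lambda^2\sigma^2/(2|D|)$ for all $\lambda$ then silently forces the mean shift to be zero.
\end{enumerate}

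Separately, your Step~1 claim that $\mathbb{E}_D[R_D(w)-R_\mu(w)]=0$ is too quick under the paper's definitions. Both risks evaluate the LLM generated from $\theta_S\sim P(\theta_S|D)$, a TinyLM fine-tuned on $D$ itself. That term is therefore the generalization gap of the client-side pipeline and is not zero in general.
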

\begin{proof}
    To start with the proof, we consider the following KL-divergence:
    \begin{align}
        KL(P_{w,D_p}\|P_{w}\otimes\mu) &= I(w, D_p) + KL(P_{D_p}\|\mu) \\
        &= I(w, D_p) + KL(\tilde{\mu}\|\mu),
    \end{align}
    where the second equation is from the fact that $D_p \sim \tilde{\mu}$. Similar to the proof of Lemma \ref{lemma:generalize_in_shadow_distribution_appx}, we denote $\tilde{w} \in \mathcal{W}$ as a copy of $w$ such that $\tilde{w}$ is independent from $D_p$ by marginalizing over $D_p$, i.e., $\tilde{w} \sim \mathbb{E}_{D_p}P(w | D_p)$. We define for any $\lambda \in \mathbb{R}$:
    \begin{align}
        \psi_{\tilde{w}, \mu}(\lambda) &= \log\mathbb{E}_{\tilde{w}, \mu}\Big[e^{\lambda\Big(R_{D}(\tilde{w}) - \mathbb{E}_{\tilde{w}, D_p}[R_{D_p}(\tilde{w})]\Big)}\Big] \\
        &= \log\mathbb{E}_{\tilde{w}, \mu}\Big[e^{\lambda R_{D}(\tilde{w})}\Big] - \lambda \mathbb{E}_{\tilde{w}, D_p}[R_{D_p}(\tilde{w})].
    \end{align} 
    
    Using Donsker-Varadhan representation of KL divergence~\cite{kldivergen-DVrep}, we have that:
    \begin{align}
        I(w, D_p) + KL(\tilde{\mu}\|\mu) &\geq \sup_g\Big\{\mathbb{E}_{w, D_p}[g(w, D)] - \log\mathbb{E}_{\tilde{w},\mu}\Big[e^{g(\tilde{w}, D)}\Big]\Big\} \\
        &= \lambda\mathbb{E}_{w, D_p}[R_{D}(w)] - \log\mathbb{E}_{\tilde{w},\mu}[e^{\lambda R_{D}(\tilde{w})}], \quad \forall\lambda\in\mathbb{R} \\
        &= \lambda\mathbb{E}_{w, D_p}[R_{D}(w)] - \lambda \mathbb{E}_{\tilde{w}, D_p}[R_{D_p}(\tilde{w})] - \psi_{\tilde{w}, \mu}(\lambda).
    \end{align}
    Since $D$ and $D_p$ are independent, we can have the following equation:
    \begin{align}
        \lambda\Big(\mathbb{E}_{w, D_p}[R_{D}(w)] - \mathbb{E}_{w, D_p}[R_{D_p}(w)]\Big) &\leq I(w, D_p) + KL(\tilde{\mu}\|\mu) + \psi_{\tilde{w}, \mu}(\lambda), \quad \forall \lambda \in \mathbb{R} \\
        \Big(\mathbb{E}_{w, D_p}[R_{D}(w)] - \mathbb{E}_{w, D_p}[R_{D_p}(w)]\Big) &\leq \frac{1}{\lambda}[I(w, D_p) + KL(\tilde{\mu}\|\mu) + \psi_{\tilde{w}, \mu}(\lambda)], \quad \forall \lambda > 0
    \end{align}
    By the assumption that $\ell(\cdot, \cdot)$ follows a $\sigma$-subgaussian, we can have that $R_{D}(\tilde{w})$ follows $\frac{\sigma}{\sqrt{|D|}}$-subgaussian. Since $\psi_{\tilde{w}, \mu}(\lambda)$ is the cummulant generative function of random variable $R_{D}(\tilde{w}) - \mathbb{E}_{\tilde{w}, D_p}[R_{D_p}(\tilde{w})]$, which is also $\frac{\sigma}{\sqrt{|D|}}$-subgaussian due to linearity, we have that:
    \begin{align}
        \psi_{\tilde{w}, \mu}(\lambda)\leq \frac{\lambda^2\sigma^2}{2|D|},\quad\forall\lambda\in\mathbb{R}.
    \end{align}
    Similar analysis as in the proof of Lemma \ref{lemma:generalize_in_shadow_distribution_appx}, we can derive that:
    \begin{align}
        \Big|\mathbb{E}_{w, D_p}\Big[R_{D}(w) - R_{D_p}(w)\Big] \Big| &\leq 2\sqrt{\frac{\sigma^2[I(w, D_p) + KL(\tilde{\mu}\|\mu)]}{2|D|}},
    \end{align}
    which concludes the proof.
\end{proof}

\subsection{Proof of Corollary \ref{corollary:impact-of-A}}

\begin{corollary}
    The utility bound and the generalization of $\mathcal{M}$ depends on the generalization of learning mechanism $\mathcal{A}$, and scale at a rate of $\mathcal{O}\Big(\sqrt{\sum_{k=1}^KI(\tilde{\theta}_{S,k}, \tilde{\theta}_{T,k}; D_p)}\Big)$ since $I(w; D_p) \leq\sum_{k=1}^KI(\tilde{\theta}_{S,k}, \tilde{\theta}_{T,k}; D_p)$.
\end{corollary}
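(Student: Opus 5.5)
The plan is to prove the mutual-information inequality $I(w; D_p) \leq \sum_{k=1}^K I(\tilde{\theta}_{S,k}, \tilde{\theta}_{T,k}; D_p)$ from the Bayesian Network in Figure~\ref{fig:bayesian-net}. The scaling claim then follows by substitution into Lemma~\ref{lemma:generalize_in_shadow_distribution} and Theorem~\ref{theo:utility_bound}. I take the graph to encode the Markov chain
\[
D_p \to \{\tilde{D}_k\}_{k=1}^K \to \Phi \coloneqq (\phi_1,\dots,\phi_K) \to D_\Delta \to w, \qquad \phi_k = (\tilde{\theta}^*_{S,k}, \tilde{\theta}^*_{T,k}).
\]
Three facts justify this reading. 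The split of $D_p$ into the $\tilde{D}_k$ uses randomness independent of everything else. Each $\phi_k$ is produced by $\mathcal{A}$ from $\tilde{D}_k$ alone, using independent internal randomness and the fixed initializations $\theta_S^0,\theta_T^0$. The tuples in $D_\Delta$ are deterministic functions of $\Phi$, namely differences with the fixed $\theta^0$'s. Training of $w$ then sees only $D_\Delta$ and its own independent optimizer noise.

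\textbf{Step 1 (data processing).} Apply the data processing inequality twice along the chain to obtain $I(w; D_p) \leq I(D_\Delta; D_p) \leq I(\Phi; D_p)$. The second inequality is in fact an equality, because $D_\Delta$ is a bijective function of $\Phi$ once the initializations are fixed.

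\textbf{Step 2 (chain rule and subadditivity).} Expand $I(\phi_1,\dots,\phi_K; D_p) = \sum_k I(\phi_k; D_p \mid \phi_{<k})$. The goal is to bound each term by $I(\phi_k; D_p)$. Conditional on $D_p$ and the random partition, the $\phi_k$ are generated with independent internal randomness. This gives
\[
I(\Phi; D_p) \leq \sum_k I(\phi_k; D_p)
\]
via the identity $I(\phi_k; D_p\mid\phi_{<k}) = H(\phi_k\mid\phi_{<k}) - H(\phi_k\mid D_p,\phi_{<k})$. The first term is at most $H(\phi_k)$. For the second, I want $H(\phi_k\mid D_p,\phi_{<k}) = H(\phi_k\mid D_p)$, which is conditional independence of the $\phi_k$ given $D_p$. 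Summing then gives the claim.

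\textbf{Main obstacle.} The difficulty sits in Step 2. The random partition couples the $\tilde{D}_k$: knowing which samples went to earlier subsets constrains $\tilde{D}_k$. So the $\phi_k$ are not exactly conditionally independent given $D_p$ if the partition is marginalized out. My first remedy is to condition on the partition $\Pi$ and treat it as independent side information. Given $(D_p,\Pi)$ the subsets are fixed and disjoint, and the $\phi_k$ are then conditionally independent. This yields $I(\Phi; D_p \mid \Pi) \leq \sum_k I(\phi_k; D_p\mid \Pi)$. Because $\Pi$ is independent of $D_p$, one has $I(\Phi; D_p) \leq I(\Phi; D_p\mid\Pi)$. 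Alternatively, for an i.i.d.\ $D_p$ one can argue directly that the $\tilde{D}_k$ are themselves independent blocks. The per-block terms $I(\phi_k; D_p\mid\Pi)$ are then identified with the statement's $I(\tilde{\theta}_{S,k},\tilde{\theta}_{T,k}; D_p)$ by symmetry of the uniform split. If this identification fails, I state the corollary with the conditional quantities, which is the form the data processing argument naturally delivers.

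\textbf{Final step.} Plug the bound into both results:
\[
|\mathbb{E}\,\texttt{Gen}| \leq \sqrt{\tfrac{2\sigma^2}{|D_p|}\textstyle\sum_k I(\phi_k;D_p)}, \qquad |\mathbb{E}\,\texttt{Util}| \leq \sqrt{\tfrac{2\sigma^2}{|D|}\big(\textstyle\sum_k I(\phi_k;D_p)+KL(\tilde\mu\|\mu)\big)}.
\]
Holding $\sigma$, the dataset sizes and the KL term fixed, both bounds scale as $\mathcal{O}\big(\sqrt{\sum_k I(\tilde{\theta}_{S,k},\tilde{\theta}_{T,k};D_p)}\big)$. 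This scaling in the per-shadow-model information $I(\phi_k; D_p)$, that is, in the generalization behaviour of $\mathcal{A}$, is the claimed dependence on $\mathcal{A}$.
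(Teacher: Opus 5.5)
Your overall route is the paper's. Data processing along the Bayesian network gives $I(w;D_p)\le I(\Phi;D_p)$, and the chain rule plus independence of the separately trained shadow pairs handles each term. Your Step 2 is logically tighter than the paper, which asserts the equality $I(\phi_k;D_p\mid\phi_{<k})=I(\phi_k;D_p)$. Conditional independence given $D_p$ only yields the inequality you derive, and that inequality is all that is needed. Equality additionally requires marginal independence, which holds for a fixed split of i.i.d.\ data.

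The step that does not work is identifying $I(\phi_k;D_p\mid\Pi)$ with $I(\phi_k;D_p)$ ``by symmetry of the uniform split.'' Because $\Pi$ is independent of $D_p$, the chain rule gives $I(\phi_k;D_p\mid\Pi)=I(\phi_k;D_p)+I(\Pi;D_p\mid\phi_k)$. Since $(\tilde D_k,\phi_k)$ is independent of $\Pi$ for i.i.d.\ $D_p$, the left side equals $I(\phi_k;\tilde D_k)$. Symmetry only shows that $I(\phi_k;D_p\mid\Pi=\pi)$ is the same for every $\pi$. It says nothing about the extra term $I(\Pi;D_p\mid\phi_k)$, which is strictly positive as soon as $\mathcal{A}$ memorizes: then $\phi_k$ and $D_p$ together reveal which indices went into block $k$.

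Worse, once the split is marginalized, the statement's inequality can fail outright. Take $K=2$, $D_p=(z_1,z_2)$ i.i.d.\ uniform on $M$ values, one sample per block, $\mathcal{A}$ returning its data, and a trainer with $w=D_\Delta$. Then $I(w;D_p)=2\log_2 M-1+1/M$ bits, whereas $\sum_k I(\phi_k;D_p)=2\log_2 M-2+2/M$.

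So your fallback is not optional; you have two correct options:
\begin{enumerate}
\item Adopt the paper's implicit model of a fixed (deterministic) split. There the $\phi_k$ are conditionally independent given $D_p$, and your Step 2 proves the statement verbatim without $\Pi$.
\item State the corollary with $I(\phi_k;D_p\mid\Pi)=I(\phi_k;\tilde D_k)$. This is what your conditioning argument actually delivers, and it is arguably the right measure of the generalization of $\mathcal{A}$.
\end{enumerate}
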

\begin{proof}
    Denote $\Phi = \{(\tilde{\theta}_{S,k}, \tilde{\theta}_{T,k})\}_{k=1}^K$. Based on the Bayesian Network in Figure \ref{fig:bayesian-net} and the Data-processing inequality~\cite{cover1999elements}, it follows that $I(w, D_p) \leq I(\Phi, D_p)$. Based on the chain rule of mutual information~\cite{khinchin2013mathematical}, we can derive that:
    \begin{align}
        I(w, D_p) \leq I(\Phi, D_p) = \sum_{k=1}^KI(\tilde{\theta}_{S,k}, \tilde{\theta}_{T,k}; D_p|\tilde{\theta}_{S,<k}, \tilde{\theta}_{T,<k}).
    \end{align}
    Since each shadow model is trained separately from the others, we can have:
    \begin{align}
        I(w, D_p) \leq I(\Phi, D_p) = \sum_{k=1}^KI(\tilde{\theta}_{S,k}, \tilde{\theta}_{T,k}; D_p|\tilde{\theta}_{S,<k}, \tilde{\theta}_{T,<k}) = \sum_{k=1}^KI(\tilde{\theta}_{S,k}, \tilde{\theta}_{T,k}; D_p),
    \end{align}
    which concludes the proof.
\end{proof}

\section{Experiment Details}

\subsection{Training TinyLMs on clients settings} 

In the training process of clients' TinyLMs, we use learning rate 1e-5 with linear scheduler. We set batch size to 1 with 16 gradient accumulation steps, gradient clipping with norm 0.1, L2 regularization with parameter decay 0.01. We fine-tune query and key parameters of all attention layers of the TinyLM and the LLM with LoRA rank 2 \cite{hulora}. We use the AdamW optimizer \cite{loshchilovdecoupled}. For each training process, we train the models for 20000 steps.

\subsection{Baseline descriptions and settings}
\label{appx:baselines}

\begin{table}[t]
\centering
\small
\caption{Fine-tuning hyperparameters and settings used for \textsc{Grad-Transformer} and all baselines.}
\setlength{\tabcolsep}{10pt}
\renewcommand{\arraystretch}{1.2}
\begin{tabular}{lc}
\hline
\multicolumn{1}{c}{\textbf{Hyperparameter}} & \multicolumn{1}{c}{\textbf{Value}} \\
\hline
Optimizer & AdamW \\
Learning rate & $1 \times 10^{-5}$ \\
Learning rate scheduler & Linear \\
Batch size & $1$ \\
Gradient accumulation steps & $16$ \\
Gradient clipping norm & $0.1$ \\
L2 regularization & $0.01$ \\
LoRA rank & $2$ \\
Training steps & $20000$ \\
Max input length & $4096$ \\
Max new tokens & $512$ \\
\hline
\end{tabular}
\label{tab:finetuning_hyperparams}
\end{table}

We use SoTA Weak-to-Strong Knowledge Distillation \cite{burns2024weak} methods as our baselines. Although these methods do not offer data privacy for clients since they need data sharing between the TinyLM and the LLM, we use them as baselines since they are most applicable to our setting. Previous works in Data-Free Knowledge Distillation are not able to be applied in our experiments since they are only applicable to text classification tasks and do not work for more complex text generation tasks in our experiments. Below are the descriptions of baselines used:

\begin{itemize}
    \item \textbf{W2S} \cite{burns2024weak}. This baseline uses a fine-tuned weak model to provide predicted labels for the dataset. Afterwards, the strong model is fine-tuned on the dataset using these labels instead of ground truth labels. We follow their original setting to split the original dataset in half, using one half to fine-tune the weak model first, and use the other half for predicting labels using weak models and fine-tuning the large model.
    \item \textbf{Conf} \cite{burns2024weak}. This baseline also a fine-tuned weak model to provide predicted labels for the dataset. Similarly, the strong model is fine-tuned on the dataset using these labels instead of ground truth labels. The dataset splitting is the same compared to W2S. However, this baseline uses an auxiliary confidence loss. The intuition is that in W2S, the strong model may also learn to imitate the errors of the supervisor, so we provide additional regularization towards what the strong pretrained model already internally knows to avoid this. The auxiliary confidence loss interpolates between weak-label
    supervision and hardened strong-model predictions:
    \begin{equation}
    \mathcal{L}_{\text{conf}}(f)
    = (1-\alpha)\,\mathrm{CE}\big(f(x), f_w(x)\big)
    + \alpha\,\mathrm{CE}\big(f(x), \hat{f}_t(x)\big),
    \end{equation}
    where $\mathrm{CE}(\cdot,\cdot)$ denotes cross-entropy,
    $f_w(x)$ is the weak predictive distribution, and $f(x)$ is the strong model output.
    The hardened target is defined as
    $\hat{f}_t(x)=\mathbb{I}[f(x)>t]$ with an adaptive threshold $t$
    chosen so that exactly half of each batch satisfies $f(x)>t$.
    In our experiments, we set $\alpha_{\max}=0.5$,
    and linearly warm up $\alpha$ from $0$ to $\alpha_{\max}$ over the first $20\%$ of training.
    \item \textbf{VisSup} \cite{guo2024vision}. VisSup quantify confidence through the agreement between the model’s soft prediction and its corresponding hard label, where
    stronger alignment implies higher confidence. Based on this principle, they
    introduce an adaptive confidence loss that dynamically balances weak supervision
    and self-supervision on a per-sample basis:
    \begin{equation}
    \mathcal{L}_{\text{AC}}(f)
    = \big(1-\beta(x)\big)\,\mathrm{CE}\big(f(x), f_w(x)\big)
    + \beta(x)\,\mathrm{CE}\big(f(x), \hat{f}(x)\big),
    \end{equation}
    with the confidence weight defined as
    \begin{equation}
    \beta(x)
    = \frac{\exp\!\big(\mathrm{CE}(f(x), \hat{f}(x))\big)}
    {\exp\!\big(\mathrm{CE}(f(x), \hat{f}(x))\big)
    + \exp\!\big(\mathrm{CE}(f(x), \hat{f}_w(x))\big)} .
    \end{equation}
    Here, $\beta(x)$ is an input-dependent confidence weight, $\hat{f}_w(x)$ denotes
    the hard label from the weak model, and $\mathrm{CE}(\cdot,\cdot)$ is the
    cross-entropy loss. The first term encourages learning from weak supervision,
    while the second term emphasizes self-supervision from the strong model in
    high-confidence cases. In our experiment, we set the temperature of the loss to $2.0$.
\end{itemize}

Other hyperparameters are kept the same for all baselines and \textsc{Grad-Transformer} and are shown in Table  \ref{tab:finetuning_hyperparams}.

\begin{algorithm}[t]
\small
\caption{Differentially Private SGD \cite{abadi2016deep}}
\label{alg:dpsgd}
\begin{algorithmic}[1]
\REQUIRE Examples $\{x_1,\ldots,x_N\}$, loss function $\mathcal{L}(\theta)=\frac{1}{N}\sum_i \mathcal{L}(\theta,x_i)$
\REQUIRE Parameters: learning rate $\eta_t$, noise scale $\sigma$, group size $L$, gradient norm bound $C$
\STATE Initialize $\theta_0$ randomly
\FOR{$t \in [T]$}
    \STATE Take a random sample $L_t$ with sampling probability $L/N$
    \STATE \textbf{Compute gradient}
    \FOR{each $i \in L_t$}
        \STATE $g_t(x_i) \leftarrow \nabla_\theta \mathcal{L}(\theta_t, x_i)$
    \ENDFOR
    \STATE \textbf{Clip gradient}
    \STATE $\bar{g}_t(x_i) \leftarrow g_t(x_i) / \max\!\left(1, \frac{\|g_t(x_i)\|_2}{C}\right)$
    \STATE \textbf{Add noise}
    \STATE $\tilde{g}_t \leftarrow \frac{1}{L}\sum_i \bar{g}_t(x_i) + \mathcal{N}(0, \sigma^2 C^2 I)$
    \STATE \textbf{Descent}
    \STATE $\theta_{t+1} \leftarrow \theta_t - \eta_t \tilde{g}_t$
\ENDFOR
\STATE \textbf{Output} $\theta_T$ and compute the overall privacy cost $(\varepsilon,\delta)$ using a privacy accounting method
\end{algorithmic}
\end{algorithm}

\section{Differential Privacy Protection}

\subsection{Preliminaries}

In this experiment, we adopt Differentially Private SGD (DP-SGD) \cite{abadi2016deep} as the client-side training mechanism $\mathcal{A}$ to obtain differential privacy protection for client's unshareable data \cite{dwork2006differential}. Firstly, we provide some basic background on differential privacy \cite{dwork2006differential} and the DP-SGD mechanism \cite{abadi2016deep}.

\paragraph{Differential Privacy.}
Differential privacy (DP) \cite{dwork2006differential} is a formal framework that provides strong, provable privacy guarantees for algorithms operating on sensitive datasets by ensuring that their outputs are nearly indistinguishable when a single data record is modified. A randomized mechanism satisfies $(\varepsilon,\delta)$-differential privacy if the probability of any output changes by at most a factor of $e^{\varepsilon}$, up to a small failure probability $\delta$, between any two adjacent datasets. Differential privacy as formally defined as follows:

\begin{definition} \cite{dwork2006differential} A randomized mechanism $\mathcal{M} : \mathcal{D} \to \mathcal{R}$ with domain $\mathcal{D}$ and range $\mathcal{R}$ satisfies $(\varepsilon,\delta)$-differential privacy if for any two adjacent inputs $d, d' \in \mathcal{D}$ and for any subset of outputs $S \subseteq \mathcal{R}$, it holds that
\[
\Pr[\mathcal{M}(d) \in S] \le e^{\varepsilon} \Pr[\mathcal{M}(d') \in S] + \delta .
\]
\end{definition}

Differential privacy is robust to auxiliary information and supports principled composition, making it well suited for complex machine learning systems.

\paragraph{Differentially Private SGD (DP-SGD).}
DP-SGD \cite{abadi2016deep} is a privacy-preserving variant of stochastic gradient descent that enforces differential privacy during model training. At each iteration, gradients are computed per example, clipped to a fixed norm to bound sensitivity, averaged, and perturbed with calibrated Gaussian noise before updating model parameters. The cumulative privacy loss over training steps is tracked using a privacy accountant, yielding an overall $(\varepsilon,\delta)$-differential privacy guarantee for the trained model. The algorithm for DP-SGD is described in Algorithm \ref{alg:dpsgd}.

\subsection{Results}

\begin{figure}[!ht]
    \centering
    \includegraphics[width=\linewidth]{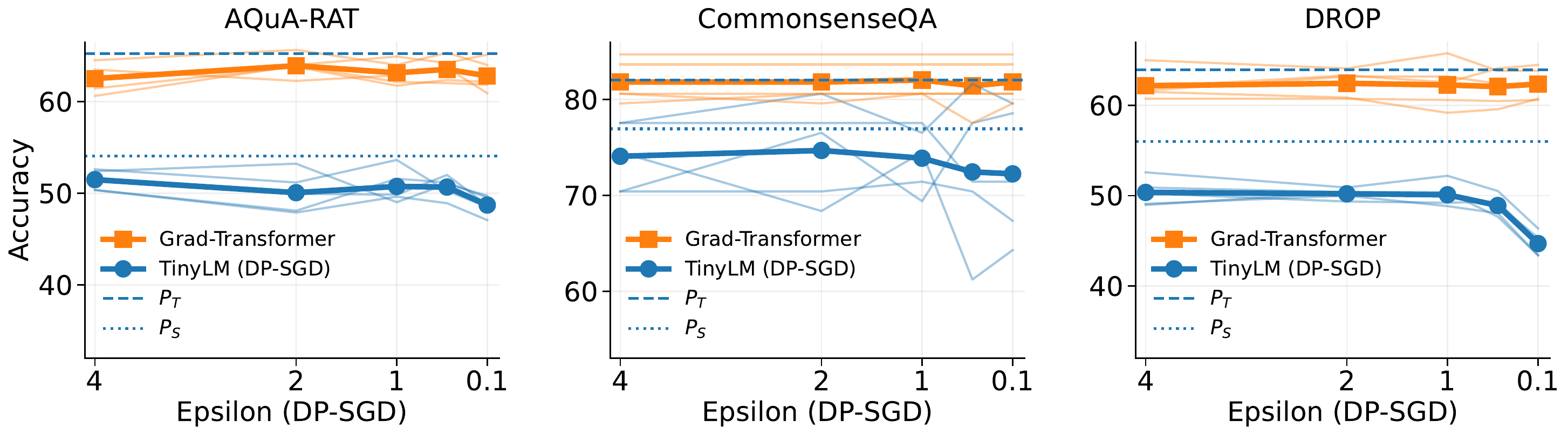}
    \caption{Averaged performance of \textsc{Grad-Transformer} in the 5-client setting with while training client model with DP-SGD on the AQuA-RAT, CommonsenseQA, DROP. The light-colored lines show performance in each client.}
    \label{fig:dp_sgd}
\end{figure}

We provide the results of \textsc{Grad-Transformer} with DP-SGD as client's fine-tuning mechanism $\mathcal{A}$ in the 5 client setting on AQuA-RAT, Commonsense and DROP datasets in Figure \ref{fig:dp_sgd}, which is an expansion of Figure \ref{fig:dp_sgd_drop}. This figure demonstrates that \textsc{Grad-Transformer} consistently perform well using under DP-SGD fine-tuning of the TinyLMs for AQuA-RAT, CommonsenseQA and DROP datasets.

\section{Additional Experiments and Analysis}
\label{appx:additional-experiments}

\subsection{Clients with different tasks}

\subsubsection{Each client having an independent task}

\begin{table}[!ht]
\centering
\small
\caption{Results of 3 client setting with different tasks on each client.}
\label{tab:diff_tasks_client}
\setlength{\tabcolsep}{12pt}
\renewcommand{\arraystretch}{1.2}
\begin{tabular}{lcccc}
\hline
 & \textbf{Metric} & \textbf{AQuA-RAT} & \textbf{CommonsenseQA} & \textbf{DROP} \\
\hline
$P_S$ & Acc & 48.43 & 77.40 & 49.36 \\
$P_T$ & Acc & 59.06 & 81.90 & 55.01 \\
\hline
\multirow{2}{*}{\textbf{Ours}}
 & Acc & 57.48 & 80.51 & 55.64 \\
 & PGR & 80.89 & 50.00 & 111.15 \\
\hline
\end{tabular}
\end{table}

In this experiment, clients have different datasets, each associated with a unique task (i.e., AQuA-RAT, CommonsenseQA, or DROP). Table \ref{tab:diff_tasks_client} (Appx. \ref{appx:additional-experiments}) shows that \textsc{Grad-Transformer} is able to approach ceiling performance (80.68\% of PGR on average) and outperform the TinyLM for each client even when they have different tasks. In fact, the generated update vectors for the LLM enable the LLM to achieve 57.48, 80.51, 55.64 compared with 48.43, 77.40, 49.36 of the TinyLMs on AQuA-RAT, CommonsenseQA, and DROP correspondingly. This promising results demonstrate the ability to generalize \textsc{Grad-Transformer} across clients and tasks.

\subsubsection{Each client having a mixture of tasks}

To empirically validate robustness under realistic distribution mismatches, we construct a 3-client federated scenario where each client's data is a mixture of three distinct tasks: AQuA-RAT (math reasoning), CommonsenseQA (commonsense reasoning), and DROP (discrete reasoning). First, the public data is constructed by taking 50\% of the original training set in AQuA-RAT, 80\% of the original training set in CommonsenseQA, 80\% of the original training set in DROP, resulting in a distribution of (0.4172, 0.0656, 0.5215) for these datasets. The private data for clients are sampled via a Dirichlet distribution with $\alpha = [1, 1, 1]$ from the remaining data samples, each client having 2000 data samples in total. This creates substantial heterogeneity across clients, and importantly, between each client's distribution and the shadow data distribution used to train \textsc{Grad-Transformer}. Note that in this experiment, we train the TinyLM for each client, and feed the update vectors to the \textsc{Grad-Transformer} individually to get the updated LLM (Ours) instead of aggregating fine-tuned TinyLMs from all clients. The data of each client is split into training set and test set with ratio 90:10. The resulting data distributions are shown in Table~\ref{tab:dirichlet_distribution}. Note the substantial distribution shifts: Client 2 is heavily skewed toward DROP (79.86\%) with minimal CommonsenseQA (4.75\%), while Client 3 is dominated by CommonsenseQA (67.71\%) with almost no DROP (1.71\%). Additionally, the private data distribution differs from all three clients.

\begin{table}[h]
\centering
\small
\caption{The distribution of data from different tasks in different clients.}
\begin{tabular}{lccc}
\toprule
 & AQuA-RAT & CommonsenseQA & DROP \\
\midrule
Client 1 & 0.3374 & 0.3278 & 0.3346 \\
Client 2 & 0.1538 & 0.0475 & 0.7986 \\
Client 3 & 0.3056 & 0.6771 & 0.0171 \\
\midrule
Public data & 0.4172 & 0.0656 & 0.5215 \\
\bottomrule
\label{tab:dirichlet_distribution}
\end{tabular}
\end{table}

\begin{table}[h]
\centering
\small
\caption{Accuracy of TinyLM, LLM and Ours for each client.}
\begin{tabular}{lccc}
\toprule
 & Client 1 & Client 2 & Client 3 \\
\midrule
$P_S$ (TinyLM) & 54 & 46.5 & 63.5 \\
Ours & 68 & 49.5 & 69 \\
$P_T$ (LLM) & 66 & 55 & 70.5 \\
\bottomrule
\end{tabular}
\end{table}

Despite the substantial distributional mismatch, \textsc{Grad-Transformer} consistently improves over the TinyLM baseline for all clients, by 25.9\%, 6.5\%, and 8.7\% respectively. Client 1 (balanced distribution, moderate divergence from shadow data) even exceeds the LLM ceiling $P_T$, while Client 2 (highest divergence from shadow data) shows the smallest but still positive improvement, consistent with Theorem \ref{theo:utility_bound} that utility degrades gracefully with $\mathrm{KL}(\tilde{\mu} \| \mu)$ rather than collapsing. 

These results suggest that while perfect distributional alignment is ideal, \textsc{Grad-Transformer} does not require it to deliver meaningful gains. The framework captures sufficiently general correlations between TinyLM and LLM update vectors during training on $D_p$ to remain effective under realistic distribution shifts.

\subsection{Number of update vector tuples.}

\begin{figure}[!ht]
    \centering
    \includegraphics[width=0.3\linewidth]{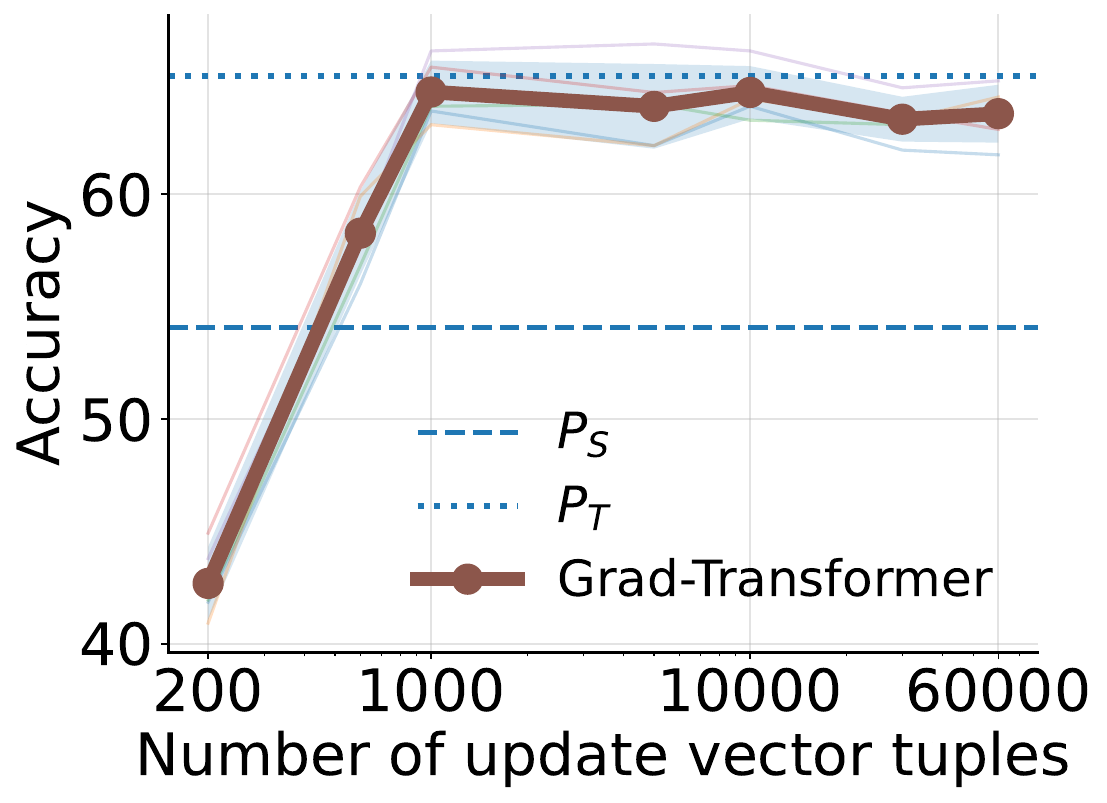}
    \caption{Ablation study on the number of update vector tuples and Grad-Transformer's performance on AQuA-RAT in the 5 client setting (log-scaled). The red line shows average performance across 5 clients. The light-colored lines show performance of each client using \textsc{Grad-Transformer}.}
    \label{fig:shadow_pairs}
\end{figure}

Figure \ref{fig:shadow_pairs} shows the performance of \textsc{Grad-Transformer} on AQuA-RAT when using different numbers of update vector tuples for training. We observe that at 1,000 update vector tuples, \textsc{Grad-Transformer} has converged to its optimal performance. This observation directly shows the minimal computing costs needed for \textsc{Grad-Transformer} to reach its optimal performance.

\begin{table}[!ht]
\centering
\scriptsize
\caption{Time saved (minutes) for each client when fine-tuning 3B-scale TinyLLM compared to fine-tuning 7B-scale LLM, following the main experiments in Table \ref{tab:1_client} and \ref{tab:5_client}. AQR: AQuA-RAT.}
\setlength{\tabcolsep}{8pt}
\renewcommand{\arraystretch}{1.2}
\begin{tabular}{
p{3.5cm}
>{\centering\arraybackslash}p{0.8cm}
>{\centering\arraybackslash}p{0.8cm}
>{\centering\arraybackslash}p{0.8cm}
}
\hline
\textbf{Stage} & \textbf{AQR} & \textbf{GSM8K} & \textbf{DROP} \\
\hline
Fine-tune 3B-TinyLM (1 client) & 53.70 & 50.47 & 101.13 \\
Fine-tune 7B-LLM (1 client) & 71.54 & 56.45 & 160.43 \\
\hline
\rowcolor{datasetgray}
Time saved using Grad-Transformer & 17.84 & 5.98 & 59.30 \\
\rowcolor{datasetgray}
Time reduction in percentage & \textcolor{ForestGreen}{24.93\%} & \textcolor{ForestGreen}{10.59\%} & \textcolor{ForestGreen}{36.96\%} \\ 
\hline
\label{tab:3b-7b-time}
\end{tabular}
\end{table}

\begin{table}[!ht]
\centering
\scriptsize
\caption{\textsc{Grad-Transformer} framework time consumption analysis using one NVIDIA A100 80GB GPU. Time consumption is computed with Qwen2.5-3B-Instruct as TinyLM and Qwen2.5-7B-Instruct as LLM as in the main experiments of the paper. Time unit is minutes.}
\setlength{\tabcolsep}{6pt}
\renewcommand{\arraystretch}{1.2}

\begin{subtable}[t]{0.48\textwidth}
\centering
\label{tab:time_uvc}
\caption{Update vectors curation cost}
\begin{tabular}{
p{3.0cm}
>{\centering\arraybackslash}p{1.3cm}
>{\centering\arraybackslash}p{1.3cm}
>{\centering\arraybackslash}p{1.3cm}
}
\hline
\textbf{Stage} & \textbf{AQuA-RAT} & \textbf{GSM8K} & \textbf{DROP} \\
\hline
Train a shadow TinyLM & 7.15 & 5.98 & 18.67 \\
Train a shadow LLM & 7.03 & 3.76 & 13.17 \\
Train a pair of shadow models & 14.18 & 9.74 & 31.84 \\
\hline
\end{tabular}
\end{subtable}
\hfill
\begin{subtable}[t]{0.48\textwidth}
\centering
\label{tab:time_total}
\caption{\textsc{Grad-Transformer} total training cost}
\begin{tabular}{
p{3.0cm}
>{\centering\arraybackslash}p{1.3cm}
>{\centering\arraybackslash}p{1.3cm}
>{\centering\arraybackslash}p{1.3cm}
}
\hline
\textbf{Stage} & \textbf{AQuA-RAT} & \textbf{GSM8K} & \textbf{DROP} \\
\hline
Update vectors curation & 70.92 & 48.70 & 159.20 \\
Train \textsc{Grad-Transformer} & 477.00 & 521.50 & 735.00 \\
\hline
\rowcolor{datasetgray} \textbf{Total} & \textbf{484.03} & \textbf{570.20} & \textbf{894.20} \\
\hline
\end{tabular}
\end{subtable}
\label{tab:time}
\end{table}

\subsection{Time consumption}

In this section, we provide the time consumption analysis for \textsc{Grad-Transformer} framework. To ensure fair comparison, all time consumption analysis are run using 1$\times$ NVIDIA A100 80GB GPU.

Using \textsc{Grad-Transformer} to update a 7B LLM using update vectors from a 3B TinyLM as in experiments in Table \ref{tab:1_client} and Table \ref{tab:5_client} reduces 24.16\% training time on average across AQuA-RAT, GSM8K and DROP datasets, when compared to directly fine-tuning a 7B LLM model on client datasets (Table \ref{tab:3b-7b-time}).

In Table \ref{tab:time}, we show the time costs for update vectors curation and \textsc{Grad-Transformer} training. In Table 8a, we show the time taken to train a pair of shadow models during update vectors curation for AQuA-RAT, GSM8K, DROP. In this process, we train the shadow TinyLM and LLM until convergence. Depending on the dataset, training a pair of shadow models takes from 9 to 32 minutes. Table 8b shows the total cost for \textsc{Grad-Transformer} framework, which includes update vectors curation and training the \textsc{Grad-Transformer}. The update vectors curation time shown is the time taken to curate 1,000 update vector tuples (since this is the sufficient number of update vector tuples as shown in Figure \ref{fig:shadow_pairs}). The \textsc{Grad-Transformer} training time shown includes time taken to train the \textsc{Grad-Transformer} for 30 epochs. Depending on the dataset, it takes from 484.03 to 894.20 minutes to curate update vectors and train the \textsc{Grad-Transformer}.

\section{Limitations and Future Work}

While \textsc{Grad-Transformer} demonstrates remarkable performance across various language modeling and reasoning tasks, there are several limitations and potential areas for further discussion. Firstly, the theoretical bound in Theorem~\ref{theo:utility_bound} demonstrates that the performance of \textsc{Grad-Transformer} is dependent on the KL divergence between the public data and private data. For highly sensitive, niche domains, curating an aligned public dataset is difficult, which may limit the performance of \textsc{Grad-Transformer}. Secondly, we have not designed \textsc{Grad-Transformer} for sparse architectures like Mixture-of-Experts~\cite{zhou2022mixture}, where LoRA update vectors may exhibit complex routing behaviors. In future work, we extend \textsc{Grad-Transformer} to cases where the distribution of private data shifts significantly from the distribution of public data, as well as redesign \textsc{Grad-Transformer} for sparse architectures.

\end{document}